\definecolor{lightblue}{RGB}{173, 216, 230} %
\theoremstyle{plain}
\newtheorem{theorem}{Theorem}[section]
\newtheorem{proposition}[theorem]{Proposition}
\newtheorem{corollary}[theorem]{Corollary}
\theoremstyle{definition}
\theoremstyle{remark}
\DeclarePairedDelimiterX{\inp}[2]{\langle}{\rangle}{#1, #2}
\newcommand*\bigcdot{\mathpalette\bigcdot@{.5}}
\newcommand*\bigcdot@[2]{\mathbin{\vcenter{\hbox{\scalebox{#2}{$\m@th#1\bullet$}}}}}
\newcommand{\muspace}{\mspace{1mu}}
\DeclareRobustCommand{\scond}{\mathchoice{\muspace\vert\muspace}{\vert}{\vert}{\vert}}
\DeclareRobustCommand{\discint}{\mathchoice{\mspace{-1.5mu}:\mspace{-1.5mu}}{\mspace{-1.5mu}:\mspace{-1.5mu}}{:}{:}}
\newcommand{\suchthat}{\mathchoice{\colon}{\colon}{:\mspace{1mu}}{:}}
\newcommand{\Lc}{\mathcal{L}}
\newcommand{\Nc}{\mathcal{N}}
\newcommand{\Uc}{\mspace{1.5mu}\mathcal{U}}
\newcommand{\Xc}{\mathcal{X}}
\newcommand{\Zc}{\mathcal{Z}}
\newcommand{\cv}{{\bf c}}
\newcommand{\fv}{{\bf f}}
\newcommand{\gv}{{\bf g}}
\newcommand{\wv}{{\bf w}}
\newcommand{\xv}{{\bf x}}
\newcommand{\zv}{{\bf z}}
\newcommand{\sv}{{\bf s}}
\def\a{\alpha}
\def\d{\delta}
\def\eps{\epsilon}
\def\th{\theta}
\DeclareMathOperator\E{\mathsf{E}}
\newcommand\eg{e.g.,\xspace}
\newcommand\ie{i.e.,\xspace}
\def\textiid{i.i.d.\@\xspace}
\newcommand\iid{\ifmmode\text{ i.i.d. } \else \textiid \fi}
\newcommand{\Real}{\mathbb{R}}
\newcommand{\half}{\frac{1}{2}}%
\def\mathllap{\mathpalette\mathllapinternal}
\def\mathllapinternal#1#2{%
  \llap{$\mathsurround=0pt#1{#2}$}}
\def\clap#1{\hbox to 0pt{\hss#1\hss}}
\def\mathclap{\mathpalette\mathclapinternal}
\def\mathclapinternal#1#2{%
  \clap{$\mathsurround=0pt#1{#2}$}}
\let\oldstackrel\stackrel
\renewcommand{\stackrel}[2]{\oldstackrel{\mathclap{#1}}{#2}}
\DeclarePairedDelimiterX{\infdivx}[2]{(}{)}{%
  #1\;\delimsize\|\;#2%
}
\renewcommand{\hbar}{h\mathllap{\overline{\vphantom{h}\hphantom{\rule{4.6pt}{0pt}}}\mspace{0.77mu}}}
\newcommand{\urltilde}{\kern -.06em\lower -.06em\hbox{~}\kern .02em}
\DeclarePairedDelimiterX{\norm}[1]{\lVert}{\rVert}{#1}
\DeclarePairedDelimiterX{\abs}[1]{\lvert}{\rvert}{#1}
\newcommand*\diff{\mathop{}\!\mathrm{d}}
\let\oldpartial\partial
\renewcommand*{\partial}{\mathop{}\!\oldpartial}
\newcommand{\defeq}{\mathrel{\mathop{:}}=}
\newcommand{\epsv}{\boldsymbol{\eps}}
\newcommand{\redt}[1]{\textcolor{red}{#1}}
\newcommand{\bluet}[1]{\textcolor{blue}{#1}}
\newcommand{\yesmark}{\redt{Y}}
\newcommand{\nomark}{\bluet{N}}
\newcommand{\sjsd}{\text{D}_{\text{JSD}}}
\newcommand{\kl}{\text{D}_{\text{KL}}}
\newcommand{\softplus}{\mathsf{sp}}
\newcommand{\dpm}{\mathsf{DPM}}
\newcommand\StartAppendixEntries{}
  \renewcommand\StartAppendixEntries{\value{tocdepth}=-10000\relax}%
  \edef\maintocdepth{\the\value{tocdepth}}%
  \renewcommand\StartAppendixEntries{\value{tocdepth}=\maintocdepth\relax}%
\icmltitlerunning{Score-of-Mixture Training: Training One-Step Generative Models Made Simple}
\begin{document}

\twocolumn[
\icmltitle{
Score-of-Mixture Training: One-Step Generative Model Training Made Simple\\ via Score Estimation of Mixture Distributions
}
\icmlsetsymbol{equal}{*}

\begin{icmlauthorlist}
\icmlauthor{Tejas Jayashankar$^*$}{mit}
\icmlauthor{J. Jon Ryu$^*$}{mit}
\icmlauthor{Gregory Wornell}{mit}
\end{icmlauthorlist}

\icmlaffiliation{mit}{Department of Electrical Engineering and Computer Science, Massachusetts Institute of Technology (MIT), Cambridge, Massachusetts, USA}

\icmlcorrespondingauthor{Tejas Jayanshankar}{tejasj@mit.edu}
\icmlcorrespondingauthor{J. Jon Ryu}{jongha.ryu@gmail.com}

\icmlkeywords{}

\vskip 0.3in
]

\printAffiliationsAndNotice{\icmlEqualContribution} %

\begin{abstract}
We propose \emph{Score-of-Mixture Training} (SMT), a novel framework for training one-step generative models by minimizing a class of divergences called the
$\a$-skew Jensen–Shannon divergence. At its core, SMT estimates the score of mixture distributions between real and fake samples across multiple noise levels.
Similar to consistency models, our approach supports both training from scratch (SMT) and distillation using a pretrained diffusion model, which we call \emph{Score-of-Mixture Distillation} (SMD).
It is simple to implement, requires minimal hyperparameter tuning, and ensures stable training. Experiments on CIFAR-10 and ImageNet 64×64 show that SMT/SMD are competitive with and can even outperform existing methods.
\end{abstract}

\section{Introduction}

Fast and efficient sampling is a key characteristic sought after in modern generative samplers. For many years, generative adversarial networks (GANs) \cite{goodfellow2014generative} set the benchmark for high-quality one-step generative sampling. However, due to the inherent training instabilities associated with discriminator training, attention has recently shifted toward diffusion-based generative models \cite{Sohl-Dickstein--Weiss--Mehswaranathan--Ganguli, Ho--Jain--Abbeel2020, karras2022elucidating}. These models trade-off sampling efficiency for more stable training and significantly improved downstream sample quality through iterative sampling.

More recently, the diffusion distillation approach has been studied as an appealing option to significantly reduce the number of sampling steps. Early work~\citep{luhman2021knowledge, salimans2022progressive, meng2023distillation, berthelot2023tract} focused on training a student model with a lower sampling budget by condensing multiple teacher denoising steps into one.  The most recent works on distillation improve performance further by leveraging a pretrained model for distribution matching via minimization of the reverse KL divergence~\citep{luo2024diff, yin2024onestep, yin2024improved, salimans2024multistep, xie2024distillation}.
While attractive, distillation approaches necessitate a pretrained diffusion model which adds a significant overhead on the required compute. 

As yet another alternative, consistency models \citep{song2023consistency, songimproved} and their variants \citep{kim2023consistency} have been proposed for training few-step generative models from scratch by simulating the trajectories of the induced probability flow ODE \cite{Song--Garg--Shi--Ermon2020} of a diffusion process.
While consistency models have demonstrated promising results in both distillation and training from scratch, training is sensitive to the choice of noise schedule and distance measure~\citep{Geng--Pokle--Luo--Lin--Kolter2024}.

In this paper, we tackle the problem of training high-quality one-step generative models more directly, \ie \emph{without} simulating an iterative reverse diffusion process for sampling or leveraging a pretrained diffusion model during training.
Starting from first principles of statistical divergence minimization, 
we show that a high-quality one-step generative model can be trained from scratch in a stable manner, via the multi-noise-level denoising score matching (DSM) technique~\citep{vincent2011connection} used in diffusion models.
We emphasize that we do not require a simulation of the reverse diffusion process in our framework.

The proposed framework
achieves the best of several worlds: (1) a new, simple statistical divergence minimization framework without probability paths of ODE (like GAN), (2) stable training using denoising score matching (like diffusion models), (3) training from scratch without a pretrained diffusion model (like consistency models), and (4) near state-of-the-art one-step image generative performance (like GAN and consistency models). 
We also demonstrate that the proposed method can be extended to distill from a pretrained diffusion model, and can achieve performance similar to state-of-the-art methods for the same.
See Table~\ref{tab:comparison} for the overview of comparison.

\begin{table*}[t]
    \centering
    \caption{Comparison of different generative modeling techniques capable of high-quality sample generation.}
    \vspace{.5em}
    \scalebox{0.825}{
    \begin{tabular}{llccc}
    \toprule
    \textbf{Generative models} & \textbf{Training idea} & \textbf{\makecell[c]{Generation}} & \textbf{\makecell[c]{Training\\stability}} & \textbf{\makecell[c]{Require\\pretrained model?}}\\
    \midrule
    GAN & minimizing JSD, with discriminator & \bluet{one-step} & \redt{unstable} & \nomark \\ 
    \midrule
    Diffusion models & training multi-noise-level denoisers via DSM & \redt{multi-step} & \bluet{stable} & \nomark \\
    Diffusion distillation & (mostly) minimizing reverse KLD (in DMD) & \{\bluet{one},few\}-step & \bluet{stable} & \yesmark\\
    \midrule
    Consistency distillation 
    & \multirow{2}{*}{simulating trajectories of probability flow ODE} 
    & \multirow{2}{*}{\{\bluet{one},few\}-step}
    & \bluet{stable} & \yesmark\\
    Consistency training & 
    & %
    & \redt{unstable} & \nomark\\
    \midrule
    \textbf{Score-of-Mixture Training} (ours) & \multirow{2}{*}{\makecell[l]{minimizing $\{\a{\text{-JSD}}\}_{\a\in[0,1]}$ with multi-noise-level\\ \quad training \& scores of \emph{mixtures} via DSM}} &
    \multirow{2}{*}{\bluet{one-step}} & \multirow{2}{*}{\bluet{stable}} & \nomark\\
    \textbf{Score-of-Mixture Distillation} (ours) & & & & \yesmark\\
    \bottomrule
    \end{tabular}}  \label{tab:comparison}
\end{table*}

The rest of the paper is organized as follows: In Sec. \ref{sec:preliminaries} we introduce the necessary background and related works central to our proposed method.  In Sec. \ref{sec:training_from_scratch} we introduce our novel one-step generative modeling approach and in Sec. \ref{sec:distillation} we detail how our framework can be modified to perform diffusion distillation. We describe practical implementation details in both the latter sections and present experimental results in Sec.~\ref{sec:experiments}. We conclude with remarks in Sec.~\ref{sec:conclusion}.
Proofs and training details are deferred to Appendix.

\section{Preliminaries and Related Work}
\label{sec:preliminaries}

In one-step generative modeling, we wish to align the generated sample distribution $q_\th(\xv)\defeq \int \d(\xv-\gv_\th(\zv))q(\zv)\diff\zv$ with the true data distribution $p(\xv)$. Here, $\gv_\th\colon \Zc \rightarrow \Xc$ is a parametric neural sampler which is also often called an implicit generative model that transforms samples from a base measure $q(\zv)$.
In this section, we review some popular methods for training generative models, which will serve as preliminaries for our framework. 
More detailed discussion on the literature is deferred to Appendix \ref{sec:related_work}.

\textbf{Generative Adversarial Networks.}
The most prominent approach in training implicit generative models is the generative adversarial network (GAN) \citep{goodfellow2014generative}.
In its most standard and widely used form, it alternates between the gradient steps of discriminator and generator training, which are
\begin{align}
\min_\psi &~ \E_{p(\xv)}[\softplus(-\ell_\psi(\xv))] + \E_{q_\th(\xv)}[\softplus(\ell_\psi(\xv))],\label{eq:gan_disc}\\
\min_\th &~ \E_{q_\th(\xv)}[\softplus(-\ell_\psi(\xv))],\label{eq:gan_gen}
\end{align}
respectively, where $\softplus(y)\defeq \log(1+e^y)$ denotes the softplus function.\footnote{The generator loss $\E_{q_\th(\xv)}[\softplus(-\ell_\psi(\xv))]$ in the second line is the so-called \emph{non-saturating} version, while the original GAN generator loss $\E_{q_\th(\xv)}[-\softplus(\ell_\psi(\xv))]$ is referred to as \emph{saturating}.}
Here, we will call $\ell_\psi(\xv)$ the \emph{discriminator}, which is supposed to capture the \emph{log density ratio} $\log\frac{p(\xv)}{q_\th(\xv)}$.\footnote{Note the one-to-one correspondence between the standard definition of discriminator $D_\psi(\xv)\defeq \frac{\exp({\ell_\psi(\xv)})}{1+\exp({\ell_\psi(\xv)})}\in[0,1]$. }
This so-called adversarial training can be understood as minimizing the Jensen--Shannon divergence (JSD) with the help of discriminator, via the variational characterization of JSD.

Despite the popularity of GANs, training them is notoriously difficult. 
Although various techniques have been proposed to regularize the GAN objective---through alternatives to JSD~\citep{Nozowin--Cseke--Tomioka2016, Arjovsky--Chintala--Bottou2017,Mao--Li--Xie--Lau--Wang--Paul2017}, novel regularizers~\citep{Miyato--Kataoka--Koyama--Yoshida2018}, and specialized network architectures~\citep{Karras2019, Brock--Donahue--Simonyan2019,Sauer--Schwarz--Geiger2022}---the discriminator training remains unstable. 
This has sparked increasing interest in developing new objectives for training generative models which we briefly discuss below.

\textbf{Diffusion Models.} Diffusion models or score-based generative models \cite{Sohl-Dickstein--Weiss--Mehswaranathan--Ganguli, Ho--Jain--Abbeel2020} are state-of-the-art generative models that are based on the principles of thermodynamic diffusion. Given a \textit{forward stochastic differential equation (SDE) process}
\begin{equation*}
    \text{d}\xv_t = f(\xv_t, t)\text{d}t + g(t) \text{d}\wv_t,
\end{equation*}
where $f(\xv_t, t)$ is the drift function, $g(t)$ is the diffusion function, and $\wv_t$ represents a Brownian noise process, diffusion models simulate the \textit{reverse (generative) process}, which is also an SDE
\begin{equation*}
    \text{d}\xv_t = [f(\xv_t, t) - g(t)^2 \nabla_{\xv_t} \log p(\xv_t)]\text{d}t + g(t) \text{d}\bar{\wv}_t.
\end{equation*}
An equivalent deterministic \textit{probability flow ODE} with the same marginals as the SDE can also be used in practice:
\begin{equation*}
    \text{d}\xv_t = \Bigl(f(\xv_t, t) - \frac{1}{2}g(t)^2 \nabla_{\xv_t} \log p(\xv_t) \Bigr)\text{d}t.
\end{equation*}
Thus, to generate samples, diffusion models are trained to learn the score of the data distribution at multiple noise levels $\sigma_t$ via \textit{denoising score matching (DSM)} \cite{vincent2011connection}, \ie by minimizing
\begin{equation*}
    \Lc_{\text{DSM}}(\th) = \E_{p(\xv)q(\zv)p(t)} \left[w(t)\|\sv_\th(\xv_t; t) - \sv(\xv_t|\xv)\|^2\right],
\end{equation*}
where $p(t)$ denotes a distribution over different noise levels, $\xv_t \defeq \xv + \sigma_t \epsv, \epsv \sim \Nc(0, \mathbf{I})$, and $\sv(\xv_t|\xv) \defeq \nabla_{\xv_t} \log p(\xv_t|\xv)$. 
It is easy to show that $\sv_\th(\xv_t; t) = \nabla_{\xv_t} \log p(\xv_t)$ using Tweedie's formula \citep{Robbins1956AnEB}.
Sampling can then be achieved by Langevin dynamics \citep{Song--Ermon2019, Song--Garg--Shi--Ermon2020} or via black-box ODE solvers \citep{karras2022elucidating, lu2022dpm, lu2022dpmpp}.

\textbf{Diffusion Distillation.} 
In practical applications, running a diffusion model for multiple steps to generate a single sample can be prohibitively expensive. 
Distilling few-step generative models from a high-quality pretrained diffusion model has thus become popular~\citep{luo2024diff, yin2024onestep, yin2024improved, salimans2024multistep, xie2024distillation}.
To learn the generator's parameters, most, if not all, approaches aim to minimize the reverse Kullback--Leibler divergence (KLD) $\text{D}_{\text{KL}}(q_\th \| p)$ averaged across multiple noise levels:
\begin{equation*}
    \text{D}^{\text{avg}}_{\text{KL}}(q_\th \| p) \defeq \E_{q_\th(\xv)p(t)q(\epsv)}[\log q_\th(\xv_t) - \log p(\xv_t)].
\end{equation*}
To update the parameters via gradient descent, the gradient of this divergence is computed as
\begin{align}
    &\nabla_\th \text{D}^{\text{avg}}_{\text{KL}}(q_\th \| p)     \label{eq:reverse_kl_grad}\\ 
    &=\E_{ q(\zv)p(t)q(\epsv)}[\nabla_\th \gv_\th(\zv)(\sv_{q_\th}(\xv_t;t) - \sv_p(\xv_t;t)) \mid_{\xv = \gv_\th(\zv)}],
    \nonumber
\end{align}
where $\sv_{q_\th}$ and $\sv_p$ are the noisy scores of the fake and true samples, respectively. 
In the distillation setup, a pretrained diffusion model is plugged in as a close proxy to the true noisy score $\sv_p(\xv_t;t)$, while the fake noisy score $\sv_{q_\th}(\xv_t;t)$ is trained along with the generator to assist the training. 

\textbf{Consistency Models.} Distillation approaches often rely on pretrained score models and may use expensive regularizers to address issues like mode collapse and improve sample quality \citep{yin2024onestep, salimans2024multistep}. In contrast, consistency models \citep{song2023consistency, songimproved}, which can be trained from scratch, are trained to simulate the underlying probability flow ODE and ensure each sample along the trajectory maps to the origin. 
Consistency training, however, can be unstable and is known to sensitive to the noise schedule and distance function~\citep{Geng--Pokle--Luo--Lin--Kolter2024}. Additionally, the architecture for consistency models need to be carefully chosen, as the approach relies on a single-sample approximation of Tweedie's formula, which is only valid when noise levels are closely spaced.

\section{Training from Scratch}
\label{sec:training_from_scratch}

In this section, we introduce our new framework, \emph{Score-of-Mixture Training} (SMT). We describe how to efficiently train one-step generative models from scratch, \ie, without a pretrained diffusion model. In Sec.~\ref{sec:distillation}, we explain how the framework can be adapted to leverage a pretrained diffusion model when available, referring to this variant as \emph{Score-of-Mixture Distillation} (SMD).

The key ingredient of this framework is \emph{distribution matching} using a new family of statistical divergences (Sec.~\ref{sec:skew_jsd}), whose gradient can be approximated by estimating the score of \emph{mixture distributions} of real and fake distributions (Sec.~\ref{sec:score_mixture}), hence the name \emph{Score of Mixture} Training.  
We adopt the concept of multi-noise level learning from diffusion models and propose multi-divergence minimization for stable training (Sec.~\ref{sec:multi_noise_alpha}).  
A practical implementation of our method is described in Sec.~\ref{sec:practical_design_of_amortized_score}, followed by details of the training procedure in Sec.~\ref{sec:training_from_scratch_training}.

\subsection{Minimizing \texorpdfstring{$\a$}{alpha}-Skew Jensen--Shannon Divergences}
\label{sec:skew_jsd}
The crux of the new framework lies in minimizing a class of statistical divergences between $p(\xv)$ and $q_\th(\xv)$ defined as
\begin{align*}
\sjsd^{(\a)}(q_\th, p) 
&\defeq \frac{1}{\a} \kl(q_\th ~\|~ \a p + (1 - \a) q_\th) \\
&~\quad+ \frac{1}{1-\a}\kl(p ~\|~ \a p + (1 - \a) q_\th)
\end{align*}
for some $\a\in(0,1)$,
which we call the \emph{$\a$-skew Jensen-Shannon divergence} ($\a$-JSD)~\citep{Nielsen2010}.
This divergence belongs to $f$-divergences~\citep{csiszar2004information}. 

Interestingly, $\a$-skew JSD naturally interpolates between the forward Kullback--Leibler divergence (KLD) $\kl(p~\|~q_\th)$ (when $\a\to0$), the standard definition of JSD (when $\a=\half$), and the reverse KLD $\kl(q_\th~\|~p)$ (when $\a\to 1$).
In contrast to the forward KLD and reverse KLD, the $\a$-skew JSD with $\a\in(0,1)$ is well-defined even when there is a support mismatch in $p$ and $q_\th$, which may be the case especially in the beginning of training.

\fbox{\bluet{\textbf{{Feature 1: Multi-Divergence Training.}}}}
Hence, we propose to minimize a weighted sum of the $\a$-JSD's for different $\a$'s, as divergences with different $\a$'s exploit different geometries between two distributions.
For example, it is known that minimizing the forward and reverse KLD leads to mode-covering and mode-seeking behaviors, respectively, and 
we can enforce better support matching behavior by considering the entire range of $\a$.

To minimize this family of divergences in practice,
we consider its gradient expression:
\begin{proposition}
\label{prop:gradient}
Suppose that $\E_{q_\th(\xv)}[\nabla_\th \log q_\th(\xv)]=0$.\footnote{It is a standard assumption in the literature~\citep{Hyvarinen2005}, which holds under a mild regularity assumption on the parametric model $q_\th(\xv)$ so that $\int \nabla_\th q_\th(\xv)\diff\xv = \nabla_\th \int q_\th(\xv)\diff\xv$.}
Then, we have
\begin{align}
&\nabla_\th \textnormal{D}_\textnormal{JSD}^{(\a)}(q_\th,p) \label{eq:grad_sjsd}\\
&= \frac{1}{\a}\E_{q(\zv)}\Bigl[\nabla_\th\gv_\th(\zv) (\sv_{\th;0}(\xv)-\sv_{\th;\a}(\xv))\Big|_{\xv=\gv_\th(\zv)}\Bigr],\nonumber
\end{align}
where we define the score of the mixture distribution
\[
\sv_{\th;\a}(\xv) \defeq \nabla_\xv\log(\a p(\xv) + (1-\a)q_\th(\xv)).
\]
\end{proposition}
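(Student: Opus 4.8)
The plan is to carry out the standard distribution‑matching gradient computation, adapted to the skew‑JSD mixture: rewrite $\sjsd^{(\a)}$ as a sum of two expectations, differentiate each term in $\th$, and show that the ``explicit‑density'' contributions cancel, leaving exactly the pathwise term in \eqref{eq:grad_sjsd}. Throughout I write $m_{\th;\a}(\xv)\defeq\a p(\xv)+(1-\a)q_\th(\xv)$ for the mixture, so that by definition $\sv_{\th;\a}=\nabla_\xv\log m_{\th;\a}$ and, in particular, $\sv_{\th;0}=\nabla_\xv\log q_\th$. Using the reparametrization $\xv=\gv_\th(\zv)$, $\zv\sim q$, for the $q_\th$‑expectation,
\[
\sjsd^{(\a)}(q_\th,p)=\frac1\a\,\E_{q(\zv)}\!\Bigl[\log\frac{q_\th(\gv_\th(\zv))}{m_{\th;\a}(\gv_\th(\zv))}\Bigr]+\frac1{1-\a}\,\E_{p(\xv)}\!\Bigl[\log\frac{p(\xv)}{m_{\th;\a}(\xv)}\Bigr].
\]

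First I would differentiate the first term, which depends on $\th$ both through $\gv_\th$ (as the argument of the log‑ratio) and through the densities $q_\th,m_{\th;\a}$ inside it. To keep these apart I introduce a frozen copy $\th'$ of the parameters and split
\begin{align*}
\nabla_\th \log\frac{q_{\th'}(\gv_\th(\zv))}{m_{\th';\a}(\gv_\th(\zv))}\Big|_{\th'=\th}
&= \nabla_\th\gv_\th(\zv)\,\bigl(\sv_{\th;0}-\sv_{\th;\a}\bigr)(\gv_\th(\zv))\\
&\quad+ \bigl(\nabla_{\th'}\log q_{\th'}-\nabla_{\th'}\log m_{\th';\a}\bigr)(\gv_\th(\zv))\Big|_{\th'=\th}.
\end{align*}
Taking $\E_{q(\zv)}$ and recalling $\gv_\th(\zv)\sim q_\th$: the pathwise piece is the desired term; in the explicit piece, the $\nabla_\th\log q_\th$ part integrates to $\E_{q_\th(\xv)}[\nabla_\th\log q_\th(\xv)]=0$ by the stated assumption, while $\nabla_\th\log m_{\th;\a}=(1-\a)\,\nabla_\th q_\th/m_{\th;\a}$ leaves behind $-\tfrac{1-\a}{\a}\int q_\th\,\nabla_\th q_\th/m_{\th;\a}\,\diff\xv$. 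Differentiating the second term is easier, since only $m_{\th;\a}$ carries $\th$; it contributes $-\tfrac{1}{1-\a}\int p\,\nabla_\th\log m_{\th;\a}\,\diff\xv=-\int p\,\nabla_\th q_\th/m_{\th;\a}\,\diff\xv$.

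The crux is then a purely algebraic cancellation of the two leftover integrals: since $(1-\a)q_\th+\a p=m_{\th;\a}$,
\[
-\frac{1-\a}{\a}\int \frac{q_\th\,\nabla_\th q_\th}{m_{\th;\a}}\diff\xv -\int \frac{p\,\nabla_\th q_\th}{m_{\th;\a}}\diff\xv
= -\frac1\a\int \nabla_\th q_\th(\xv)\,\diff\xv
= -\frac1\a\,\nabla_\th\!\int q_\th(\xv)\,\diff\xv = 0,
\]
where the last equality is exactly the mild regularity condition quoted in the footnote ($\nabla_\th$ commutes with $\int\!\diff\xv$). What survives is $\tfrac1\a\E_{q(\zv)}[\nabla_\th\gv_\th(\zv)(\sv_{\th;0}(\xv)-\sv_{\th;\a}(\xv))|_{\xv=\gv_\th(\zv)}]$, which is \eqref{eq:grad_sjsd}. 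As a sanity check, $\a\to1$ recovers the reverse‑KLD gradient \eqref{eq:reverse_kl_grad} (and $\a\to0$ the forward‑KLD gradient).

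The step I expect to be the main obstacle is the bookkeeping of the double $\th$‑dependence in the $q_\th$‑expectation: it is easy to drop or double‑count the explicit‑density terms, or to conflate them with the pathwise term. The frozen‑copy ($\th'$) device makes this unambiguous, after which everything collapses to the one‑line cancellation above; the only other technical points—justifying differentiation under the two expectations and ensuring finiteness of each term so that the split is legitimate—are covered by the regularity assumptions already in force.
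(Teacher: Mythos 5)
Your proposal is correct and follows essentially the same route as the paper's proof: a chain-rule split of the $q_\th$-expectation into pathwise and explicit-density contributions, followed by cancellation of the explicit terms using the stated assumption together with the interchange of $\nabla_\th$ and $\int\diff\xv$. The only cosmetic difference is that you group the leftover integrals via $\int\nabla_\th q_\th=0$ directly, whereas the paper packages them as $-\tfrac{1}{\a(1-\a)}\E_{\a p+(1-\a)q_\th}[\nabla_\th\log(\a p+(1-\a)q_\th)]$ before concluding it vanishes — the same cancellation in different clothing.
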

This proposition suggests that we can update the generator $\gv_\th(\zv)$ using this gradient expression, provided that we can estimate the \emph{score of the mixture distribution} $\sv_{\th;\a}(\xv)$.

\fbox{\bluet{\textbf{{Feature 2: Amortized Score Model.}}}}
To implement this idea, in this paper, we propose to use an \emph{amortized score model} $(\xv,\a)\mapsto\sv_{\psi}(\xv; \a)$, to approximate the score of mixture $\sv_{\th; \a}(\xv)$. 
Through our experiments we show that learning the scores of mixture over different $\a$'s using a single model is effective and helps training.
In Sec.~\ref{sec:score_mixture}, we explain how we can train the amortized score model $(\xv,\a)\mapsto\sv_{\psi}(\xv;\a)$ using samples from $p(\xv)$ and $q_\th(\xv)$.

\subsection{Learning with Multiple Noise Levels}
\label{sec:multi_noise_alpha}
To achieve stable training,
we opt to minimize the divergence at different noise levels by considering the convolved distributions, 
$p_t \defeq p * \Nc(\mathbf{0}, \sigma_t^2\mathbf{I}_D)$ and $q_{\th, t} \defeq q_\th * \Nc(\mathbf{0}, \sigma_t^2\mathbf{I}_D)$. 
This idea is widely used in the existing distillation methods. 
We borrow the variance-exploding Gaussian noising process notation from \citet{karras2022elucidating} where $\sigma_t \in [\sigma_{\text{min}}, \sigma_{\text{max}}]$.
As we also integrate over different $\alpha$'s,
the final objective becomes
\begin{equation}
    \Lc_{\text{gen}}(\th) \defeq \E_{p(\alpha)p(t)}[\sjsd^{(\a)}(q_{\th, t}, p_t)],
\end{equation}
where we will prescribe the choice of $p(\a)$ in Sec.~\ref{sec:training_from_scratch_training}.
Similar to Eq. \eqref{eq:grad_sjsd}, the gradient of the divergence at noise level $t$ can be approximated via the amortized score as
\begin{align}
&\nabla_\th \sjsd^{(\a)}(q_{\th, t},p_t) 
\approx \boldsymbol{\gamma}_{\psi}(\theta; \a,t) 
\label{eq:grad_sjsd_imp}\\
&\defeq \E_{q(\zv)q(\epsv)}\Bigl[\nabla_\th\gv_\th(\zv)\frac{\sv_{\psi}(\xv_t; 0, t)-\sv_{\psi}(\xv_t; \a, t)}{\a}\Big|_{\xv=\gv_\th(\zv)}\Bigr],\nonumber
\end{align}
where
the amortized score model $\sv_{\psi}(\xv_t; \a, t)$, which is conditioned on the noise level $t$, is an estimate of $\sv_{\th; \a, t}(\xv_t) \defeq \nabla_{\xv_t} \log (\a p(\xv_t) + (1-\a)q_\th(\xv_t))$.
We provide a practical implementation of the amortized score model as a small modification of a diffusion model architecture in Sec.~\ref{sec:practical_design_of_amortized_score}.
We remark in passing that this expression can be understood as a generalization of the gradient update of Eq.~\eqref{eq:reverse_kl_grad} used in the existing reverse-KLD-based distillation schemes.

Finally, we can then approximate the generator gradient as
\begin{align*}
&\nabla_\th \Lc_{\text{gen}}(\th) \approx \E_{p(\a)p(t)}[\boldsymbol{\gamma}_{\psi}(\theta; \a,t) ].
\end{align*}
Importantly, similar to existing distillation methods, the gradient only involves the output of the score model, but not its gradient. 
This is beneficial since such extra gradient information requires expensive backpropagation through the score model to the generator~\citep{zhou2024score}.
\subsection{Estimating Score of Mixture Distributions}
\label{sec:score_mixture}
Estimating the score of the mixture distribution turns out to be as simple as minimizing a mixture of the score matching losses, as stated in the following proposition:
\newpage

\begin{proposition}
\label{prop:interpolated_score}
For any $\a\in[0,1]$,
the minimizer of the objective function
\begin{align}
\Lc(\psi; \a) 
= \a\,\E_{p(\xv)}&[\|\sv_{\psi}(\xv; \a) - \sv_p(\xv)\|^2] \nonumber\\
+ (1 - \a)\,\E_{q_\th(\xv)}&[\|\sv_{\psi}(\xv; \a) - \sv_{q_\th}(\xv)\|^2]
\label{eq:mixture_score_matching}
\end{align}
satisfies $\sv_{\psi^*}(\xv; \a) = \sv_{\th;\a}(\xv).$
\end{proposition}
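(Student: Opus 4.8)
The plan is to show that the objective $\Lc(\psi;\a)$ in Eq.~\eqref{eq:mixture_score_matching} is a quadratic functional in the function values $\sv_\psi(\xv;\a)$ whose pointwise minimizer is exactly the score of the mixture $\a p + (1-\a)q_\th$. First I would rewrite both expectations against a common measure. Since $\E_{p(\xv)}[h(\xv)] = \int h(\xv)p(\xv)\diff\xv$ and similarly for $q_\th$, the objective becomes
\[
\Lc(\psi;\a) = \int \Bigl(\a\, p(\xv)\norm{\sv_\psi(\xv;\a)-\sv_p(\xv)}^2 + (1-\a)\,q_\th(\xv)\norm{\sv_\psi(\xv;\a)-\sv_{q_\th}(\xv)}^2\Bigr)\diff\xv.
\]
For each fixed $\xv$ this is a strictly convex quadratic in the vector $\sv_\psi(\xv;\a)\in\Real^D$ (assuming $\a\in(0,1)$ so both coefficients are positive; the endpoints $\a\in\{0,1\}$ are trivial, where the objective only constrains the score on the support of the single active distribution and the claim reduces to the standard score-matching minimizer), so its minimizer is the $p(\xv),q_\th(\xv)$-weighted average of $\sv_p(\xv)$ and $\sv_{q_\th}(\xv)$:
\[
\sv_{\psi^*}(\xv;\a) = \frac{\a\, p(\xv)\,\sv_p(\xv) + (1-\a)\,q_\th(\xv)\,\sv_{q_\th}(\xv)}{\a\, p(\xv) + (1-\a)\,q_\th(\xv)}.
\]
This follows by completing the square, or equivalently by differentiating the integrand in $\sv_\psi(\xv;\a)$ and setting the derivative to zero; since an unconstrained function-valued model can realize this pointwise optimum, it is the global minimizer of $\Lc(\psi;\a)$.

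The remaining step is to identify the right-hand side with $\sv_{\th;\a}(\xv) = \nabla_\xv\log(\a p(\xv)+(1-\a)q_\th(\xv))$. Writing $m_\a(\xv)\defeq \a p(\xv)+(1-\a)q_\th(\xv)$ and using $\sv_p = \nabla_\xv\log p = \nabla_\xv p / p$ and $\sv_{q_\th} = \nabla_\xv q_\th / q_\th$, the numerator above is $\a\,\nabla_\xv p(\xv) + (1-\a)\,\nabla_\xv q_\th(\xv) = \nabla_\xv m_\a(\xv)$, so the minimizer equals $\nabla_\xv m_\a(\xv)/m_\a(\xv) = \nabla_\xv\log m_\a(\xv) = \sv_{\th;\a}(\xv)$, as claimed. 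This is essentially the observation underlying Vincent's denoising-score-matching identity~\citep{vincent2011connection}, applied here to a two-component mixture rather than a Gaussian convolution.

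The only genuine subtlety — more a matter of careful statement than a real obstacle — is handling the supports. When $p$ and $q_\th$ have mismatched supports, $\sv_p$ is only defined where $p>0$ and $\sv_{q_\th}$ only where $q_\th>0$; but on the region where exactly one density vanishes the corresponding quadratic term drops out (its coefficient $\a p(\xv)$ or $(1-\a)q_\th(\xv)$ is zero), and $\nabla_\xv\log m_\a$ still reduces consistently to the score of the single surviving component, so the identity persists almost everywhere with respect to $m_\a$. I would also note in passing the standard caveat that "the minimizer of $\Lc(\psi;\a)$" should be read as the minimizer over a sufficiently expressive model class, and that the $\sv_p,\sv_{q_\th}$ inside Eq.~\eqref{eq:mixture_score_matching} are not actually needed to run the algorithm — as with ordinary DSM one replaces them by the tractable conditional scores, which I expect the paper to spell out after this proposition.
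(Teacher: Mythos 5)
Your proposal is correct and follows essentially the same route as the paper's proof: rewrite the objective as a single integral weighted by the mixture density, complete the square (equivalently, minimize the pointwise quadratic) to obtain the $\bigl(\a p,(1-\a)q_\th\bigr)$-weighted average of $\sv_p$ and $\sv_{q_\th}$, and then identify that average with $\nabla_\xv\log(\a p+(1-\a)q_\th)$. Your added remarks on support mismatch and the endpoint cases $\a\in\{0,1\}$ are reasonable refinements the paper omits, but they do not change the argument.
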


Since we train with multiple noise levels, we are interested in the marginal score of $\xv_t = \xv + \sigma_t \epsv, \epsv \sim \Nc(0, \mathbf{I})$ at some noise level $\sigma_t$. We can use denoising score matching \cite{vincent2011connection} to define an equivalent \emph{sample-only} objective to learn the score using Tweedie's formula.
Namely, to approximate $\sv_{\th;\a, t}(\xv)$ using the amortized score model $\sv_{\psi}(\xv; \a, t)$, 
we can minimize 
\begin{align*}
\Lc_{\text{score}}(\psi) \defeq \E_{p(\a)p(t)}[\Lc_{\text{score}}(\psi; \a, t) ],
\end{align*}
where
\begin{align}
&\Lc_{\text{score}}(\psi; \a, t) \label{eq:mixture_score_matching_imp}\\
&\defeq \a\,\E_{p(\xv)q(\epsv)}[\|\sv_{\psi}(\xv_t; \a, t) + \epsv/\sigma_t\|^2] \nonumber\\
&~~~~~ + (1 - \a)\,\E_{q_\th(\xv)q(\epsv)}[\|\sv_{\psi}(\xv_t; \a, t) + \epsv/\sigma_t\|^2].
\nonumber
\end{align}
See Proposition~\ref{prop:interpolated_score_dsm} for a formal statement.
In practice, we parametrize the score model in the form of a \emph{denoiser} and reconstruct the score from the denoiser output via Tweedie's formula; see Appendix~\ref{sec:appendix_amortized_denoiser}.

\fbox{\bluet{\textbf{{Feature 3: Leveraging Real and Fake Samples.}}}}
We remark that our score learning objective seamlessly utilizes both real and fake samples throughout the training, helping the generator better generalize.
This is in contrast to some existing diffusion distillation methods, which introduce expensive regularizers to integrate real samples, or backpropagate through the pretrained score model~\citep{yin2024onestep, yin2024improved, salimans2024multistep}. 

\subsection{Practical Design of Amortized Score Network}
\label{sec:practical_design_of_amortized_score}

With an additional conditioning scheme to embed auxiliary information about $\a$ in addition to the noise level $\sigma_t$, any existing diffusion model backbone can be used to parameterize the amortized score network $\sv_{\psi}(\xv;\a,t)$.
Here, we describe how we can modify the popular UNet-based score architectures~\citep{Song--Garg--Shi--Ermon2020, Nichol--Dhariwal2021, karras2022elucidating} with minimal modifications.

First, drawing from the noise embedding sensitivity analysis by \citet{songimproved}, we opt for a Fourier embedding $\cv_\a$ with a default scale of 16. This choice ensures that the embedding is sufficiently sensitive to fluctuations in $\a$, particularly during the early stage of training. 

Then, we concatenate the $\alpha$-embedding with the embedding of other auxiliary information (\eg $t$ and labels) and apply a single SiLU \cite{elfwing2018sigmoid} activated linear layer: 
\begin{equation*}
\cv_{\text{out}} = {\sf silu}(\mathbf{W}_{\text{aux}} \cv_{\text{aux}} + \mathbf{W}_\alpha \cv_\alpha).
\end{equation*}
The rationale behind this choice is as follows: as training progresses, the real and fake distributions begin to overlap, making it natural for the amortized score model to become less sensitive to $\a$.
Thanks to the additional linear layer $\mathbf{W}_\alpha$ after the $\a$-embedding $\cv_\a$, this behavior can be realized when $\mathbf{W}_\alpha \approx \mathbf{0}$, when necessary.

\begin{figure*}
    \centering
    \includegraphics[width=0.8\linewidth]{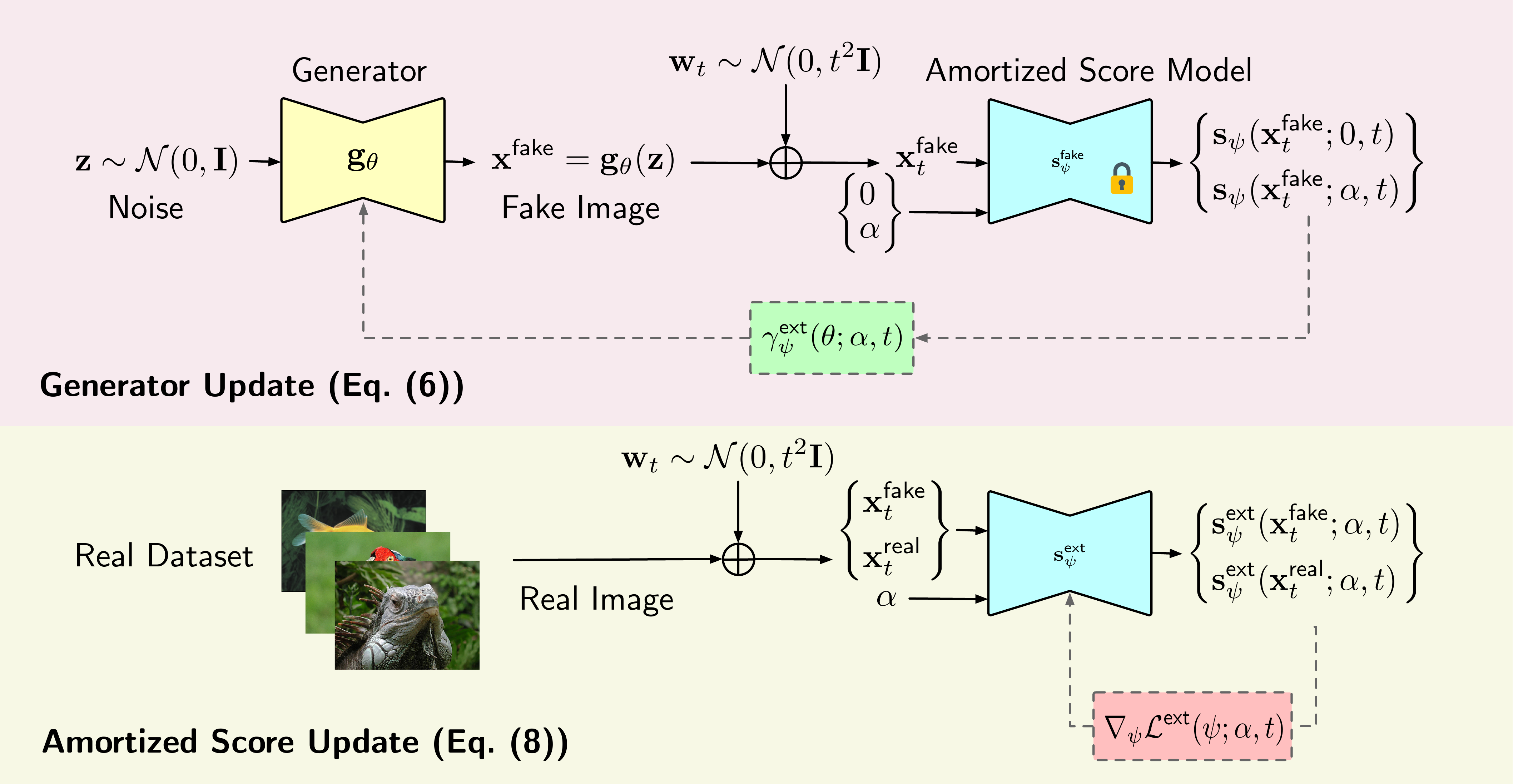}
\caption{{Overview of SMT.}
\textbf{Top:} To update the generator, we compute the gradient of the $\a$-JSD on noisy fake samples with the \textit{frozen} amortized score model using Eq.~\eqref{eq:grad_sjsd_imp}. 
\textbf{Bottom:} The amortized score model is updated by computing the score of the mixture distribution on both fake and real noisy samples, and then updating the weights using the gradient in Eq.~\eqref{eq:mixture_score_matching_imp}.}
    \label{fig:smt_scratch}
\end{figure*}

\subsection{Training}
\label{sec:training_from_scratch_training}
\textbf{Alternating Training.} Our training scheme alternates between the score estimation with the score matching objective in Eq.~\eqref{eq:mixture_score_matching_imp}, and the generator training with Eq.~\eqref{eq:grad_sjsd_imp}, where we plug-in $\sv_{\psi}(\xv_t; \a, t)$ in place of $\sv_{\th;\a, t}(\xv_t)$.
This is similar in spirit to GAN training, but the DSM technique in our framework in place of the discriminator training naturally stabilizes training.
The overall training framework is summarized in Fig.~\ref{fig:smt_scratch} and Alg.~\ref{alg:smt} in Appendix \ref{sec:appendix_amortized_score_training}.

\textbf{Initialization.} 
We warm up the generator with a standard denoising task as in diffusion models for several steps to better initialize the weights, as we empirically found that initializing the generator with pretrained weights from a denoiser significantly accelerated convergence. 
The amortized score network is randomly initialized.

\textbf{Choice of $p(\alpha)$.} 
The choice of $p(\a)$ is crucial in our framework. To train both the generator and score model, we sample $\a$ from a uniform distribution over 1000 equally spaced points in  $[0,1]$, ensuring a dense enough grid to generalize to any $\a$. For score training, we further ensure that 25\% of the sampled $\a$'s are zero, since this is always used in our gradient update; see Eq.~\eqref{eq:grad_sjsd_imp}. 

\textbf{Adaptive Weighting.} In practice we compute the gradient with an adaptive weight $w(\xv_t, \xv, \a, t)$ to ensure that the scale of the gradient for each minibatch sample is roughly uniform for different values of $\a$ and $t$. Hence, we modify the generator gradient in Eq.~\eqref{eq:grad_sjsd_imp} as
\begin{align}
    &\mathbf{\gamma}^w_\psi(\th; \a, t) \defeq  \E_{q(\zv)}\Bigl[\nabla_\th\gv_\th(\zv) \times     \label{eq:weighted gradient}\\
    &\Bigl\lbrace w(\xv_t, \xv, \a, t)\frac{\sv_{\psi}(\xv_t; 0, t)-\sv_{\psi}(\xv_t; \a, t)}{\a}\Bigr\rbrace\Big|_{\xv=\gv_\th(\zv)}\Bigr] \nonumber,
\end{align}
where the weighting is defined as
\begin{equation}
w(\xv_t, \xv, \a, t) \defeq w_\a(\xv_t, t)w_{\text{DMD}}(\xv_t, \xv, t).
\label{eq:adaptive_weight}
\end{equation}
Here $w_{\text{DMD}}$ is the adaptive noise weighting introduced by \cite{yin2024onestep} (see Eq.~\eqref{eq:dmd_weighting} in Appendix \ref{sec:related_work}) and $w_\a(\xv_t, t)$ is a new weighting inspired by the pseudo-Huber norm \cite{song2024improved, Geng--Pokle--Luo--Lin--Kolter2024}
\begin{equation}
    w_\a(\xv_t, t) \defeq \a \sqrt{\frac{\|\sv_\psi(\xv_t; 0, t) - \sv_\psi(\xv_t; 1, t)\|^2}{\|\sv_\psi(\xv_t; 0, t) - \sv_\psi(\xv_t; \a, t)\|^2}}.\nonumber
\end{equation}
This weighting still preserves the limiting forward KLD behavior of the objective as $\alpha \rightarrow 0$ and simplifies to DMD gradient when $\alpha=1$. We empirically show the efficacy of our adaptive weighting term $w_\a(\xv_t, t)$ through ablation studies on the CIFAR-10 dataset in Sec.~\ref{sec:ablation_studies}; see Fig.~\ref{fig:global}b.

\textbf{Regularization with GAN.}  
We empirically found that a GAN-type regularization can accelerate convergence even further in the beginning of training.
More concretely, we can train the discriminator $\ell_\psi(\xv_t; t)\approx \log \frac{p(\xv)}{q_\th(\xv)}$ by the GAN discriminator training in Eq.~\eqref{eq:gan_disc}.
In our implementation, we opt to train a discriminator using a variant based on the $\a$-JSD, as described in Appendix~\ref{app:gan_type_reg}.
Given a discriminator $\ell_\psi(\xv_t; t)$, we minimize a \textit{non-saturating version} of the $\alpha$-JSD loss (cf.~Eq.~\eqref{eq:gan_gen}), 
\begin{equation}
\Lc^{(\a, t)}_{\text{GAN}}(\th) = \E_{q_\th(\xv_t)}\Bigl[{\sf sp}\Bigl(-\ell_\psi(\xv_t; t)-\log \frac{\a}{1 - \a}\Bigr)\Bigr].
\label{eq:non_sat_amortized_gan_loss}
\end{equation}
The derivation can be found in Appendix~\ref{app:gan_type_reg}.
Similar to \citet{yin2024improved}, we parameterized the discriminator by a stack of convolution layers, applied on top of an intermediate feature of the amortized score network at $\alpha = 1/2$. 

Similar to DMD2~\citep{yin2024improved}, we implement a GAN discriminator building on top of the score network, with only a few additional MLP layers. This score-model-dependent design allows the full model to benefit from the training stability provided by denoising score matching, while the GAN discriminator loss only trains the small auxiliary MLP. (For ImageNet, the generator has 296M parameters and the discriminator has 18M.) Thus, the discriminator represents a small fraction of the overall model size and has a negligible impact on training speed. As a result, our use of the GAN regularizer is both efficient and stable.

\section{Distilling from Pretrained Diffusion Model}
\label{sec:distillation}
In our development so far, we do not assume access to a pretrained diffusion model.  
In this section, we show how a practitioner can train a one-step generative model leveraging
a pretrained diffusion model, if available, within our framework. 
The proposed distillation scheme is comparable or even outperforms the state-of-the-art distillation schemes.

\subsection{How To Leverage Pretrained Diffusion Model}
In the distillation setup, we treat the pretrained diffusion model as the data score $\sv_p(\xv_t;t)$, and 
thus training the score of mixture $\sv_{\th;\a}(\xv_t;t)$ using a single, amortized model may not be the most efficient parameterization.
Hence, instead, we consider the following expression 
\begin{equation*}
\sv_{\th;\a}(\xv) 
= D_{\th;\a}(\xv)\sv_p(\xv) + (1-D_{\th;\a}(\xv))\sv_{q_\th}(\xv),
\end{equation*}
where
\begin{align*}
D_{\th;\a}(\xv)
&\defeq 
\frac{\a p(\xv)}{\a p(\xv)+(1-\a)q_\th(\xv)}
\\
&=\sigma\Bigl(\log\frac{p(\xv)}{q_\th(\xv)}+\log\frac{\a}{1-\a}\Bigr).
\end{align*}
See Proposition~\ref{prop:alternate_parametrization} for a formal statement.
In words, we can express the score of mixture $\sv_{\th;\a}(\xv)$ as a mixture of scores $\sv_p$ and $\sv_{q_\th}$, where the weight is $(D_{\th;\a}(\xv),1-D_{\th;\a}(\xv))$.
This suggests that instead of an amortized modeling of the score of mixture, we can use an alternative parameterization,
\begin{align*}
\sv_{\psi}^{\sf exp}(\xv; \a)
&\defeq D_{\psi}(\xv; \a)\sv_p(\xv) 
+ (1-D_{\psi}(\xv; \a))\sv_{\psi}^{\sf fake}(\xv),
\end{align*}
where 
\begin{align*}
D_{\psi}(\xv; \a)
\defeq \sigma\Bigl(\ell_\psi(\xv)+\log\frac{\a}{1-\a}\Bigr).
\end{align*}
Here, we can parameterize the \emph{discriminator} $\xv\mapsto \ell_\psi(\xv)$ in the same way as we do for the GAN discriminator.

We can extend this to multiple noise levels easily. Hence, an alternative parameterization for $\sv_{\th;\a}(\xv_t; t)$ is 
\begin{align}
\sv_{\psi}^{\sf exp}(\xv_t; \a, t)
&\defeq D_{\psi}(\xv_t; \a, t)\sv_p(\xv_t; t) \\
&\qquad+ (1-D_{\psi}(\xv_t; \a, t))\sv_{\psi}^{\sf fake}(\xv_t; t), \nonumber
\end{align}
where
\begin{align}
D_{\psi}(\xv_t; \a, t)
\defeq \sigma\Bigl(\ell_\psi(\xv_t; t)+\log\frac{\a}{1-\a}\Bigr).
\end{align}
Plugging this explicit score model into Eq.~\eqref{eq:mixture_score_matching_imp}, we can learn both the fake score model $\sv_\psi^{\sf fake}$ and the discriminator $\ell_\psi$ at different noise levels.

\begin{corollary}
\label{cor:interpolated_score_dsm_alternative}
Let $\a \in [0, 1]$ be fixed and $\sigma_t$ be some fixed noise level. 
Then, the minimizer of the objective function
\begin{align}
&\Lc^{\sf exp}(\psi; \a, t) 
\label{eq:mixture_score_matching_imp_alternative}\\
&\defeq \a\,\E_{p(\xv)q(\epsv)}[\|\sv_{\psi}^{\sf exp}(\xv_t; \a, t) + \epsv/\sigma_t\|^2] \nonumber\\
&~~~~ + (1 - \a)\,\E_{q_\th(\xv)q(\epsv)}[\|\sv_{\psi}^{\sf exp}(\xv_t; \a, t) + \epsv/\sigma_t\|^2]
\nonumber
\end{align}
satisfies 
\begin{equation*}
\sv_{\psi^*}^{\sf fake}(\xv; t) = \sv_{q_\th}(\xv; t)
~\text{ and }~
\ell_{\psi^*}(\xv_t;t) = \log\frac{p(\xv_t)}{q_\th(\xv_t)}.
\end{equation*}
\end{corollary}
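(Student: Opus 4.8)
The plan is to reduce this corollary to Proposition~\ref{prop:interpolated_score} (together with its denoising-score-matching reformulation, Proposition~\ref{prop:interpolated_score_dsm}) applied to the explicit parameterization $\sv_\psi^{\sf exp}$. First I would invoke the denoising-score-matching identity: for a fixed noise level $\sigma_t$, Proposition~\ref{prop:interpolated_score_dsm} says that the sample-only objective $\Lc^{\sf exp}(\psi;\a,t)$ in Eq.~\eqref{eq:mixture_score_matching_imp_alternative} differs from the population objective
\[
\a\,\E_{p(\xv_t)}[\|\sv_{\psi}^{\sf exp}(\xv_t;\a,t)-\sv_{p}(\xv_t;t)\|^2]
+(1-\a)\,\E_{q_{\th}(\xv_t)}[\|\sv_{\psi}^{\sf exp}(\xv_t;\a,t)-\sv_{q_\th}(\xv_t;t)\|^2]
\]
only by a constant independent of $\psi$ (the Tweedie/Vincent argument, using $\sv_p(\xv_t\mid\xv)=-\epsv/\sigma_t$). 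Hence the minimizers of the two objectives coincide. By Proposition~\ref{prop:interpolated_score} — or more directly by the elementary pointwise fact that for weights $(\a,1-\a)$ the unique minimizer of $a\|v-x\|^2+b\|v-y\|^2$ over $v$ is the convex combination $\frac{ax+by}{a+b}$ — any minimizer must satisfy, for $(\a p_t+(1-\a)q_{\th,t})$-a.e.\ $\xv_t$,
\[
\sv_{\psi^*}^{\sf exp}(\xv_t;\a,t)
=\frac{\a p(\xv_t)\sv_p(\xv_t;t)+(1-\a)q_{\th}(\xv_t)\sv_{q_\th}(\xv_t;t)}{\a p(\xv_t)+(1-\a)q_{\th}(\xv_t)}
=\sv_{\th;\a,t}(\xv_t),
\]
the score of the mixture.

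Next I would unwind the structure of $\sv_\psi^{\sf exp}$. By construction $\sv_\psi^{\sf exp}(\xv_t;\a,t)=D_\psi(\xv_t;\a,t)\sv_p(\xv_t;t)+(1-D_\psi(\xv_t;\a,t))\sv_\psi^{\sf fake}(\xv_t;t)$, while by Proposition~\ref{prop:alternate_parametrization} the target mixture score has the analogous decomposition $\sv_{\th;\a,t}(\xv_t)=D_{\th;\a,t}(\xv_t)\sv_p(\xv_t;t)+(1-D_{\th;\a,t}(\xv_t))\sv_{q_\th}(\xv_t;t)$ with $D_{\th;\a,t}(\xv_t)=\sigma(\log\frac{p(\xv_t)}{q_\th(\xv_t)}+\log\frac{\a}{1-\a})$. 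Matching these, the optimality condition forces
\[
D_\psi(\xv_t;\a,t)\sv_p(\xv_t;t)+(1-D_\psi(\xv_t;\a,t))\sv_\psi^{\sf fake}(\xv_t;t)
=D_{\th;\a,t}(\xv_t)\sv_p(\xv_t;t)+(1-D_{\th;\a,t}(\xv_t))\sv_{q_\th}(\xv_t;t)
\]
almost everywhere. The cleanest way to conclude is to note that this must hold simultaneously for all $\a\in[0,1]$ in the support of $p(\a)$ — in particular, evaluating at $\a=1/2$ (or, cleaner still, exploiting that $\ell_\psi$ and $\sv_\psi^{\sf fake}$ do not depend on $\a$): since the score training objective integrates over $\a$, the minimizer must match the target mixture for a.e.\ $\a$, and a mixture weight $D_{\th;\a,t}$ sweeping over $(0,1)$ as $\a$ varies (for fixed $\xv_t$ with $p(\xv_t),q_\th(\xv_t)>0$) pins down both the coefficient and the "fake" component. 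Concretely, subtracting the identity at two distinct values of $\a$ eliminates the common $\sv_\psi^{\sf fake}$ and $\sv_{q_\th}$ pieces and yields $(D_\psi-D_{\th;\a,t})(\sv_p-\sv_\psi^{\sf fake})=(D_\psi-D_{\th;\a,t})(\sv_p-\sv_{q_\th})$ after rearrangement, from which one reads off $\sv_{\psi^*}^{\sf fake}(\xv_t;t)=\sv_{q_\th}(\xv_t;t)$ and then $D_\psi(\xv_t;\a,t)=D_{\th;\a,t}(\xv_t)$; since $\sigma(\cdot)$ is injective and $D_\psi(\xv_t;\a,t)=\sigma(\ell_\psi(\xv_t;t)+\log\frac{\a}{1-\a})$, $D_{\th;\a,t}(\xv_t)=\sigma(\log\frac{p(\xv_t)}{q_\th(\xv_t)}+\log\frac{\a}{1-\a})$, we get $\ell_{\psi^*}(\xv_t;t)=\log\frac{p(\xv_t)}{q_\th(\xv_t)}$.

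The main obstacle is the identifiability step — justifying that from the single matched-mixture identity one can disentangle $\sv_\psi^{\sf fake}$ and $\ell_\psi$ rather than only constraining the combination $\sv_\psi^{\sf exp}$. This requires using that the objective is an average over $\a$ (so the match holds for a range of $\a$) and that on the overlap region $\{p>0,q_\th>0\}$ the weights $D_{\th;\a,t}$ are not degenerate; on the region where, say, $q_\th(\xv_t)=0$, the "fake" component is unconstrained by the data but is also irrelevant (it enters $\sv_\psi^{\sf exp}$ only through the factor $1-D_{\th;\a,t}=0$), so the stated equalities should be read as holding a.e.\ with respect to the relevant measures, exactly as in Proposition~\ref{prop:interpolated_score}. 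I would state the conclusion with that almost-everywhere caveat and otherwise keep the argument at the level of the pointwise quadratic-minimization lemma plus injectivity of the sigmoid. The remaining details (the Vincent/Tweedie constant, the convexity of the quadratic) are routine and inherited verbatim from the proofs of Propositions~\ref{prop:interpolated_score} and~\ref{prop:interpolated_score_dsm}.
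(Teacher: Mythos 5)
Your proposal is correct and follows the route the paper itself intends: the paper gives no standalone proof of this corollary, treating it as immediate from Proposition~\ref{prop:interpolated_score_dsm} (the DSM objective is minimized by the mixture score) combined with Proposition~\ref{prop:alternate_parametrization} (the mixture score decomposes as $D_{\th;\a}\sv_p + (1-D_{\th;\a})\sv_{q_\th}$ with $D_{\th;\a}=\sigma(\log\tfrac{p}{q_\th}+\log\tfrac{\a}{1-\a})$), after which one matches the explicit parameterization term by term. Where you go beyond the paper is in flagging the identifiability issue: for a single fixed $\a$, the objective only pins down the combination $\sv_\psi^{\sf exp}$, so the literal statement that \emph{the} minimizer satisfies both identities is loose --- many pairs $(\ell_\psi,\sv_\psi^{\sf fake})$ realize the same $\sv_\psi^{\sf exp}$, and the stated pair is only the canonical one. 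Your repair (using that the training objective averages over $\a$, and that matching the mixture for a range of $\a$ disentangles the two components because the weights $\sigma(\cdot+\log\tfrac{\a}{1-\a})$ sweep over $(0,1)$ with a shift structure that forces $\ell_{\psi^*}=\log\tfrac{p}{q_\th}$ and then $\sv_{\psi^*}^{\sf fake}=\sv_{q_\th}$) is sound, though the ``subtract at two values of $\a$'' algebra is sketched rather than carried out; the cleanest version differentiates the matched identity in $\log\tfrac{\a}{1-\a}$ and uses that $\sigma'(v+c)/\sigma'(u+c)$ is constant in $c$ only when $u=v$. The almost-everywhere caveats on the overlap of supports are also appropriate and consistent with how Proposition~\ref{prop:interpolated_score} is proved.
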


We remark that this new regression objective in Eq.~\eqref{eq:mixture_score_matching_imp_alternative} provides a new way to compute the log density ratio, as an alternative to the GAN training (see Eq.~\eqref{eq:gan_disc}). 
In Appendix~\ref{app:lsgan}, we establish a connection between this objective for training a discriminator to an existing GAN discriminator objective in the literature.

With this new, explicit parameterization, we can approximate the gradient expression in Eq.~\eqref{eq:grad_sjsd_imp} as
\begin{align}
&\nabla_\th\sjsd^{(\a)}(q_{\th, t}, p_t)\nonumber \\
&\approx \boldsymbol{\gamma}_{\psi}^{\sf exp}(\theta; \a,t) \label{eq:explicit grad}\\
&\defeq\E_{q(\zv)}\Bigl[D_{\psi}(\xv_t; \a, t)\times \nonumber\\
&\qquad\qquad
\nabla_\th\gv_\th(\zv)
\frac{\sv_\psi^{\sf fake}(\xv_t; t)-\sv_p(\xv_t, t)}{\a}\Big|_{\xv=\gv_\th(\zv)}\Bigr].\nonumber
\end{align}

\subsection{Implementation and Training}

\textbf{Model Architectures.}
We can leverage any existing diffusion model architectures directly for the fake score $\sv_{\psi}^{\sf fake}(\xv_t;t)$. 
We parametrize the discriminator $\ell_\psi(\xv_t;t)$ similar to the noise-conditional discriminator in our training from scratch setting (see Sec.~\ref{sec:training_from_scratch_training}). 
The difference is that we can train the discriminator by minimizing the DSM loss in Eq.~\eqref{eq:mixture_score_matching_imp_alternative} naturally, without an additional GAN loss.  
When training the generator, we plug in this approximate log density ratio into Eq.~\eqref{eq:non_sat_amortized_gan_loss} to regularize the generator updates.

\textbf{Training.} We also train in an alternating fashion. 
Since we have access to a pretrained score model, we use this to initialize the weights of both the generator and the fake score model.
We utilize the same sampling distribution for $\alpha$ as in our training from scratch setup (see Sec. \ref{sec:training_from_scratch_training}).
The procedure is summarized in Fig.~\ref{fig:smt_distillation} and Alg.~\ref{alg:smt distillation} in Appendix~\ref{sec:appendix_amortized_score_training}.

\begin{figure*}[t] %
    \centering
    \begin{minipage}{0.6\textwidth} %
        \centering
        \captionsetup{type=table} %
        \caption{{Image generation results on ImageNet 64x64 (class-conditional) and CIFAR-10 32x32 (unconditional).} The size of the sampler is denoted by the number of parameters (\# params), and NFE stands for the Number of Function Evaluations.
        The best FIDs from each category are highlighted in bold, and our methods \textbf{SMT} and \textbf{SMD} are highlighted with a blue shade.
        }\vspace{.5em}
        \scalebox{0.765}{
        \begin{tabular}{@{}lcccccc@{}}
        \toprule
        \multicolumn{1}{c}{} & \multicolumn{3}{c}{ImageNet 64x64} & \multicolumn{3}{c}{CIFAR-10 32x32} \\
        \cmidrule(lr){2-4} \cmidrule(lr){5-7}
        Method & \# params & NFE & FID$\downarrow$ & \# params & NFE & FID$\downarrow$ \\ \midrule
        \multicolumn{7}{c}{\textit{Training from scratch: Diffusion models}} \\
        DDPM \citep{Ho--Jain--Abbeel2020}  & - & - & - & 56M & 1000 & 3.17 \\
        ADM \citep{dhariwal2021diffusion} & 296M & 250 & 2.07 & - & - & -\\
        EDM \citep{karras2022elucidating} & 296M & 512 & \textbf{1.36} & 56M & 35 & \textbf{1.97} \\
        \hdashline
        \multicolumn{7}{c}{\textit{Training from scratch: One-step models}} \\
        2-RF + distill \cite{liu2022flow} & - & - & - & 56M & 1 & 4.85 \\
        CT \cite{song2023consistency} & 296M & 1 & 13.0 & 56M & 1 & 8.70 \\
        iCT \citep{song2024improved} & 296M & 1 & 4.02 & 56M & 1 & 2.83 \\
        iCT-deep \citep{song2024improved} & 592M & 1 & 3.25 & 112M & 1 & \textbf{2.51} \\
        ECT \cite{Geng--Pokle--Luo--Lin--Kolter2024} & 280M & 1 & 5.51 & 56M & 1 & 3.60 \\
        \rowcolor{lightblue}
        \textbf{SMT} (ours) & 296M & 1 & \textbf{3.23} & 56M & 1 & 3.13 \\
        \midrule
        \multicolumn{7}{c}{\textit{Diffusion distillation}}\\
        PD \citep{salimans2022progressive}  & 296M & 1 & 10.7 & 60M & 1 & 9.12 \\
        TRACT \citep{berthelot2023tract} & 296M & 1 & 7.43 & 56M & 1 & 3.78 \\
        CD (LPIPS) \citep{song2023consistency} & 296M & 1 & 6.20 & 56M & 1 & 4.53 \\
        Diff-Instruct \citep{luo2024diff} & 296M & 1 & 5.57 & 56M & 1 & 4.53 \\
        MultiStep-CD \citep{heek2024multistepconsistencymodels} & 1200M & 1 & 3.20 & - & - & -\\
        DMD w/o reg \citep{yin2024onestep} & 296M & 1 & 5.60 & 56M & 1 & 5.58 \\
        DMD2 w/ GAN \cite{yin2024improved} & 296M & 1 & 1.51 & 56M & 1 & 2.43 \\
        MMD \citep{salimans2024multistep} & 400M & 1 & 3.00 & - & - & - \\
        SiD \cite{zhou2024score} & 296M & 1 & 1.52 & 56M & 1 & \textbf{1.92} \\
        SiM \cite{luoone} & - & - & - & 56M & 1 & 2.02 \\
        2-RF ++ \cite{lee2024improving} & 296M & 1 & 3.07 & 56M & 1 & 4.31\\
        \rowcolor{lightblue}
        \textbf{SMD} (ours) & 296M & 1 & \textbf{1.48} & 56M & 1 & 2.22 \\
        \hdashline
        \multicolumn{7}{c}{\textit{w/ expensive regularizer, simulation or finetuning}} \\
        CTM  \cite{kim2023consistency} & 296M & 1 & 1.92 & 56M & 1 & \textbf{1.98} \\
        DMD w/ reg \citep{yin2024onestep} & 296M & 1 & 2.62 & 56M & 1 & 2.66 \\
        DMD2 (finetuned) \cite{yin2024improved} & 296M & 1 & \textbf{1.23} & - & - & - \\
        \bottomrule
        \end{tabular}}
        \label{tab:exp}
        \end{minipage}
    \hfill
    \begin{minipage}{0.39\textwidth} %
    \centering   
    \begin{subfigure}[b]{\linewidth}
        \centering
        \includegraphics[width=\linewidth]{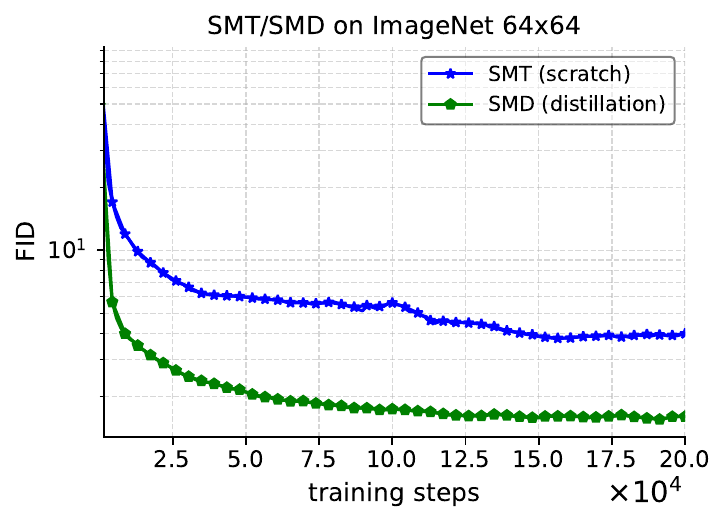}
        \caption{ImageNet 64$\times$64 (scratch and distillation).}
        \label{fig:imagenet_training}
    \end{subfigure}\vspace{1em}
    
    \begin{subfigure}[b]{\linewidth}
        \centering
        \includegraphics[width=\linewidth]{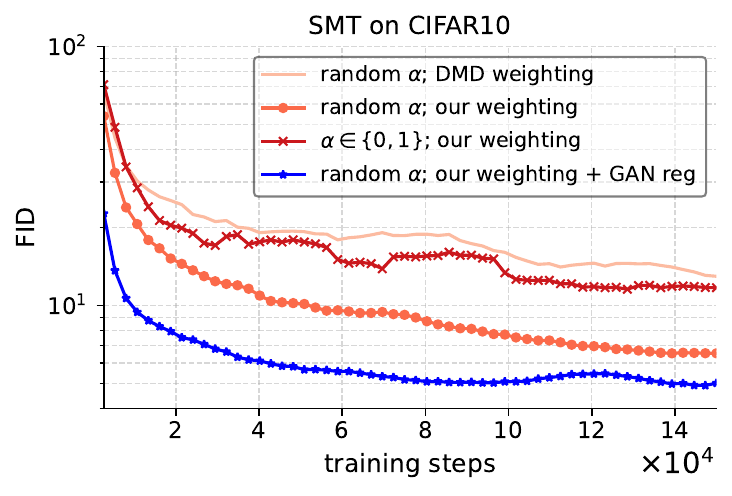}
        \caption{CIFAR-10 with ablation studies (scratch).}
        \label{fig:ablations}
    \end{subfigure}
    \nextfloat
    \caption{{FID evolution with training.}}
    \label{fig:global}
    \end{minipage}
\end{figure*}

\section{Experiments}
\label{sec:experiments}
In this section, we first present results on the ImageNet $64 \times 64$ dataset. 
We then demonstrate the competitiveness of our method on the CIFAR-10 dataset and conduct a series of ablation studies.  
We measure performance through sample quality as measured by the Fr\'echet Inception Distance (FID)~\citep{heusel2017gans}. 
The exact hyerparameters, training configurations used and additional results, including an example training dynamics and latent interpolation, can be found in Appendix~\ref{sec:appendix_experiments_and_results}. 
Our implementation can be found at \url{https://github.com/tkj516/score-of-mixture-training}.

\subsection{Class-conditional ImageNet 64x64 Generation}

\textbf{Experimental Setup.} We trained class-conditional one-step generative models on ImageNet $64 \times 64$~\citep{imagenet}, experimenting with both distillation and training from scratch.
\textbf{In both cases}, we used the ADM architecture \citep{Nichol--Dhariwal2021} as the base score model architecture, and the discriminator $\ell_\psi(\xv_t;t)$ was implemented as a stack of convolution layers operating on the bottleneck feature from the score network, similar to DMD2 \citep{yin2024improved}.
For \textbf{training from scratch}, we augmented the score architecture using an $\a$-embedding as described in Sec.~\ref{sec:practical_design_of_amortized_score}. The total number of parameters of the amortized score model remained unchanged otherwise. As a warmup stage, we pretrained the generator on the dataset using a standard diffusion denoising objective for 40k steps to initialize the weights.
For \textbf{distillation}, we used a pretrained diffusion model from \citep{karras2022elucidating}.

\begin{figure}[tb]
    \centering
    \includegraphics[width=\linewidth]{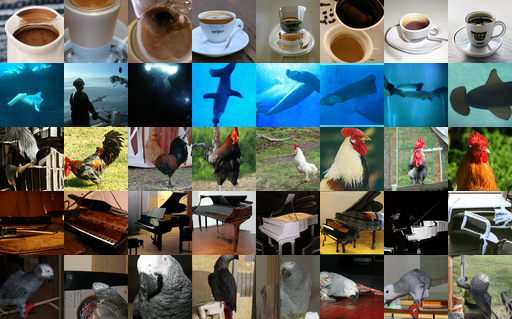}
    \caption{Samples from SMT on ImageNet 64$\times$64. Each row represents a unique class.
    Additional samples can be found in Appendix~\ref{sec:samples}.
    }%
    \label{fig:imagenet snippet}
\end{figure}

\textbf{Results.} We evaluated our method against several published baselines for both training from scratch and distillation. 
As shown in Table \ref{tab:exp}, when trained from scratch, our generator with 296M parameters outperforms both consistency training and its improved variant \cite{song2023consistency, song2024improved}, with a much smaller training budget (200k iterations with batch size of 40 vs. 800k iterations with batch size of 512). 
Our model also competes favorably with iCT-deep, despite using a generator with half the number of parameters: FID of 3.23 with 296M parameters (ours) vs. 3.25 with 592M parameters (iCT-deep). 
We observed stable training throughout, without requiring extensive hyperparameter tuning or special noise schedule adjustments as in consistency training, as visualized in Fig.~\ref{fig:global}a. 
We also surpass the ECT model \cite{Geng--Pokle--Luo--Lin--Kolter2024} of similar size and training budget that includes several modifications to induce stability in consistency training. Samples generated using our method can be found in Fig. \ref{fig:imagenet snippet} and Appendix \ref{sec:appendix_experiments_and_results}.

In the distillation setting, our model achieves a competitive FID of 1.48, outperforming several baselines. 
Notably, we outperform consistency distillation methods, such as multistep consistency distillation~\citep{heek2024multistepconsistencymodels}, despite using only a fraction of the model size (256M parameters against 1200M parameters).
Our model also surpasses consistency trajectory models (CTM)~\citep{kim2023consistency}, which ensure consistency between random points along the PF ODE trajectory, by simulating the reverse diffusion sampler for an arbitrary number of steps per minibatch, thus resulting in high computational cost.

We also outperform reverse-KLD methods with similar compute or regularizers such as DMD~\citep{yin2024onestep} and DMD2~\citep{yin2024improved} with FIDs of 5.60 and 1.51 respectively.  
We note that on spending significant extra compute, DMD and DMD2 achieved improved results. For example, in the DMD framework without any GAN regularization, the authors simulate the reverse process of a diffusion model and sample several thousand noise-image pairs to anchor the generator’s outputs. Each noise-image pair requires evaluating the diffusion denoiser 256 times for ImageNet 64×64, which is extremely costly in practice. In contrast in the DMD2 framework the authors adopt a lengthy finetuning stage with GAN regularization of 400k steps to further improve results.

We did not resort to any of the above techniques and sought to find an approach that worked best with a single execution of the training pipeline run for 200k steps.

\subsection{Unconditional CIFAR-10 Generation}

\textbf{Experimental Setup.} We evaluated our method on the CIFAR-10 dataset~\citep{Krizhevsky--Hinton2009} for unconditional one-step generative modeling, considering both training from scratch and distillation. \textbf{In both cases}, we employed a DDPM++ architecture~\citep{Song--Garg--Shi--Ermon2020} with EDM preconditioning~\citep{karras2022elucidating}. The discriminator again followed the convolutional stack used in DMD2.
For \textbf{training from scratch}, we modified the score model to incorporate the $\alpha$-embedding (Sec.~\ref{sec:practical_design_of_amortized_score}) while maintaining a similar network size. To mitigate overfitting due to the dataset's small size, we enabled dropout with $p=0.13$, as in EDM.
In the \textbf{distillation setting}, we initialized the generator with a pretrained unconditional diffusion model from~\citep{karras2022elucidating}, using the same UNet backbone and weights. Distillation performed well without dropout. 

\textbf{Results.} The last three columns in Table~\ref{tab:exp} highlight the performance of our method on CIFAR-10 compared to various baselines. In our training from scratch setting, despite utilizing a lower training budget (150k steps with a batch size of 40) than many methods, our approach remains highly competitive. 
In terms of training budget, the most comparable baseline is ECT, which we are able to outperform without requiring excessive design considerations and hyperparameter tuning. 
Our distillation results are also competitive.
In particular, we outperform Diff-Instruct and DMD2, which are only based on minimizing the reverse KLD. 
This corroborates the benefit of our multi-divergence minimization approach. Image samples can be found in Appendix~\ref{sec:samples}.

\subsection{Ablation Studies}
\label{sec:ablation_studies}

We use the CIFAR-10 dataset to study the effectiveness of the design choices that we have proposed; see Fig.~\ref{fig:global}b.

\textbf{Choice of Adaptive Gradient Weighting.} 
Starting with our base objective without the GAN regularizer, we tested our $(\a,t)$-adaptive weighting in Eq.~\eqref{eq:adaptive_weight}. 
Fig.~\ref{fig:global}b demonstrates the benefits of our weighting scheme, compared to the DMD weight function that only depends on $t$.

\textbf{Learning with Single vs. Multiple $\alpha$'s.} 
The $\a$-JSD reduces to the reverse KLD of DMD and other distillation methods, when $\alpha=1$. 
To test the efficacy with multi-$\a$ learning, we implemented an amortized variant, training the score model only with $\alpha \in \{0,1\}$. 
Results show that conditioning on a range of \(\alpha\)-values not only minimizes multiple divergences but also strengthens the \(\alpha\) embedding as a conditioning signal thereby facilitating more accurate divergence minimization.

\textbf{Accelerated Convergence with GAN Regularizer.} 
We finally verify the benefits of our novel GAN-type regularizer for $\a$-JSD minimization. As demonstrated by the second and fourth curves in Fig.~\ref{fig:global}b, the GAN regularizer helps accelerate convergence especially in the beginning of training.

\section{Concluding Remarks}
\label{sec:conclusion}

In this paper, we show that high-quality one-step generative models can be trained from scratch and in a stable manner, without  simulating the reverse diffusion process or probability flow ODE as in diffusion models and consistency models.
The key distinctive idea in our framework is a new multi-divergence minimization paradigm implemented by estimating the score of mixture distributions.
For stable training, 
we borrow multi-level noise learning and denoising score matching techniques from the diffusion literature. 
Our empirical results show that accurate score estimation facilitates stable minimization of statistical divergences. We hope this work offers a fresh perspective on generative modeling and inspires further research in the field.

\textbf{Limitations and Future Work.}
While SMT/SMD achieve strong empirical performance, there is still room for improvement in both architecture and training strategies. Despite achieving highly competitive FID scores for one-step generation from scratch, models capable of few-step generation---such as consistency models---might further improve FID with additional iterations.Finally, given the generality of our framework, we believe these ideas could extend to other complex modalities, including speech and audio synthesis.
We leave such directions for future work.

\section*{Impact Statement}
We introduce Score-of-Mixture Training, a simple yet effective one-step generative modeling framework that requires minimal design effort and hyperparameter tuning. We hope its ease of implementation will drive further research into efficient, state-of-the-art neural sampling. However, we acknowledge the potential risks of misuse, including the generation of fake, biased, or misleading content. Our work focuses on fundamental research using standard machine learning datasets, but we recognize the importance of ensuring generative models are secure and privacy-preserving to democratize this technology responsibly.

\section*{Acknowledgements}
This work was supported in part by the MIT-IBM Watson AI Lab under Agreement No. W1771646, by MIT Lincoln Laboratory, by ONR under Grant No. N000014-23-1-2803, and by the Department of the Air Force Artificial Intelligence Accelerator under Cooperative Agreement No. FA8750-19-2-1000. The views and conclusions contained in this document are those of the authors and should not be interpreted as representing the official policies, either expressed or implied, of the Department of the Air Force or the U.S. Government. The U.S. Government is authorized to reproduce and distribute reprints for Government purposes notwithstanding any copyright notation herein.

\bibliography{ref}

\newcommand{\noopsort}[1]{} \newcommand{\noop}[1]{}
\begin{thebibliography}{61}
\providecommand{\natexlab}[1]{#1}
\providecommand{\url}[1]{\texttt{#1}}
\expandafter\ifx\csname urlstyle\endcsname\relax
  \providecommand{\doi}[1]{doi: #1}\else
  \providecommand{\doi}{doi: \begingroup \urlstyle{rm}\Url}\fi

\bibitem[Arjovsky et~al.(2017)Arjovsky, Chintala, and Bottou]{Arjovsky--Chintala--Bottou2017}
Arjovsky, M., Chintala, S., and Bottou, L.
\newblock Wasserstein {GAN}.
\newblock In \emph{Proc. {IEEE} Comput. Soc. Conf. Comput. Vis. Pattern Recognit.}, 2017.

\bibitem[Berthelot et~al.(2023)Berthelot, Autef, Lin, Yap, Zhai, Hu, Zheng, Talbott, and Gu]{berthelot2023tract}
Berthelot, D., Autef, A., Lin, J., Yap, D.~A., Zhai, S., Hu, S., Zheng, D., Talbott, W., and Gu, E.
\newblock {TRACT}: {D}enoising {D}iffusion {M}odels with {T}ransitive {C}losure {T}ime-{D}istillation.
\newblock \emph{arXiv Preprint arXiv:2303.04248}, 2023.

\bibitem[Brock et~al.(2019)Brock, Donahue, and Simonyan]{Brock--Donahue--Simonyan2019}
Brock, A., Donahue, J., and Simonyan, K.
\newblock {L}arge {S}cale {GAN} training for {H}igh {F}idelity {N}atural {I}mage {S}ynthesis.
\newblock In \emph{Int. Conf. Learn. Repr.}, 2019.

\bibitem[Che et~al.(2020)Che, Zhang, Sohl-Dickstein, Larochelle, Paull, Cao, and Bengio]{che2020your}
Che, T., Zhang, R., Sohl-Dickstein, J., Larochelle, H., Paull, L., Cao, Y., and Bengio, Y.
\newblock Your {GAN} is {S}ecretly an {E}nergy-{B}ased {M}odel and you should use {D}iscriminator {D}riven {L}atent {S}ampling.
\newblock In \emph{Adv. Neural Inf. Proc. Syst.}, volume~33, pp.\  12275--12287, 2020.

\bibitem[Csisz{\'a}r et~al.(2004)Csisz{\'a}r, Shields, et~al.]{csiszar2004information}
Csisz{\'a}r, I., Shields, P.~C., et~al.
\newblock {I}nformation {T}heory and {S}tatistics: {A} {T}utorial.
\newblock \emph{Found. Trends Commun. Inf. Theory}, 1\penalty0 (4):\penalty0 417--528, 2004.

\bibitem[Deng et~al.(2009)Deng, Dong, Socher, Li, Li, and Fei-Fei]{imagenet}
Deng, J., Dong, W., Socher, R., Li, L.-J., Li, K., and Fei-Fei, L.
\newblock {ImageNet}: {A} {L}arge-{S}cale {H}ierarchical {I}mage {D}atabase.
\newblock In \emph{Proc. {IEEE} Comput. Soc. Conf. Comput. Vis. Pattern Recognit.}, pp.\  248--255, 2009.
\newblock \doi{10.1109/CVPR.2009.5206848}.

\bibitem[Dhariwal \& Nichol(2021)Dhariwal and Nichol]{dhariwal2021diffusion}
Dhariwal, P. and Nichol, A.
\newblock Diffusion {M}odels {B}eat {G}{A}ns {O}n {I}mage {S}ynthesis.
\newblock In \emph{Adv. Neural Inf. Proc. Syst.}, volume~34, pp.\  8780--8794, 2021.

\bibitem[Elfwing et~al.(2018)Elfwing, Uchibe, and Doya]{elfwing2018sigmoid}
Elfwing, S., Uchibe, E., and Doya, K.
\newblock Sigmoid-weighted {L}inear {U}nits for {N}eural {N}etwork {F}unction {A}pproximation in {R}einforcement {L}earning.
\newblock \emph{Neural Networks}, 107:\penalty0 3--11, 2018.

\bibitem[Geng et~al.(2025)Geng, Pokle, Luo, Lin, and Kolter]{Geng--Pokle--Luo--Lin--Kolter2024}
Geng, Z., Pokle, A., Luo, W., Lin, J., and Kolter, J.~Z.
\newblock Consistency {M}odels {M}ade {E}asy.
\newblock In \emph{Int. Conf. Learn. Repr.}, 2025.

\bibitem[Goodfellow et~al.(2014)Goodfellow, Pouget-Abadie, Mirza, Xu, Warde-Farley, Ozair, Courville, and Bengio]{goodfellow2014generative}
Goodfellow, I., Pouget-Abadie, J., Mirza, M., Xu, B., Warde-Farley, D., Ozair, S., Courville, A., and Bengio, Y.
\newblock {G}enerative {A}dversarial {N}ets.
\newblock In \emph{Adv. Neural Inf. Proc. Syst.}, volume~27, 2014.

\bibitem[Heek et~al.(2024)Heek, Hoogeboom, and Salimans]{heek2024multistepconsistencymodels}
Heek, J., Hoogeboom, E., and Salimans, T.
\newblock Multistep {C}onsistency {M}odels, 2024.

\bibitem[Heusel et~al.(2017)Heusel, Ramsauer, Unterthiner, Nessler, and Hochreiter]{heusel2017gans}
Heusel, M., Ramsauer, H., Unterthiner, T., Nessler, B., and Hochreiter, S.
\newblock {GAN}s {T}rained by a {T}wo {T}ime-{S}cale {U}pdate {R}ule {C}onverge to a {L}ocal {N}ash {E}quilibrium.
\newblock In \emph{Adv. Neural Inf. Proc. Syst.}, volume~30, 2017.

\bibitem[Ho et~al.(2020)Ho, Jain, and Abbeel]{Ho--Jain--Abbeel2020}
Ho, J., Jain, A., and Abbeel, P.
\newblock Denoising {D}iffusion {P}robabilistic {M}odels.
\newblock In \emph{Adv. Neural Inf. Proc. Syst.}, volume~33, pp.\  6840--6851, 2020.

\bibitem[Hoogeboom et~al.(2023)Hoogeboom, Heek, and Salimans]{Hoogeboom--Heek--Salimans2023}
Hoogeboom, E., Heek, J., and Salimans, T.
\newblock Simple {D}iffusion: {E}nd-to-end {D}iffusion for {H}igh {R}esolution {I}mages.
\newblock In \emph{Proc. Int. Conf. Mach. Learn.}, pp.\  13213--13232. PMLR, 2023.

\bibitem[Huang et~al.(2023)Huang, Zhang, Zheng, You, Wang, Qian, and Xu]{huang2023knowledge}
Huang, T., Zhang, Y., Zheng, M., You, S., Wang, F., Qian, C., and Xu, C.
\newblock Knowledge {D}iffusion for {D}istillation.
\newblock In \emph{Adv. Neural Inf. Proc. Syst.}, volume~36, pp.\  65299--65316, 2023.

\bibitem[Hyv{\"a}rinen(2005)]{Hyvarinen2005}
Hyv{\"a}rinen, A.
\newblock Estimation of {N}on-{N}ormalized {S}tatistical {M}odels by {S}core {M}atching.
\newblock \emph{J. Mach. Learn. Res.}, 6\penalty0 (4), 2005.

\bibitem[Karras et~al.(2021)Karras, Laine, and Aila]{Karras2019}
Karras, T., Laine, S., and Aila, T.
\newblock {A} {S}tyle-{B}ased {G}enerator {A}rchitecture for {G}enerative {A}dversarial {N}etworks.
\newblock \emph{{IEEE} Trans. Pattern Anal. Mach. Intell.}, 43\penalty0 (12):\penalty0 4217--4228, Dec 2021.
\newblock \doi{10.1109/TPAMI.2020.2970919}.

\bibitem[Karras et~al.(2022{\natexlab{a}})Karras, Aittala, Aila, and Laine]{Karras--Aittala--Aila--Laine2022}
Karras, T., Aittala, M., Aila, T., and Laine, S.
\newblock Elucidating the {D}esign {S}pace of {D}iffusion-based {G}enerative {M}odels.
\newblock In \emph{Adv. Neural Inf. Proc. Syst.}, volume~35, pp.\  26565--26577, 2022{\natexlab{a}}.

\bibitem[Karras et~al.(2022{\natexlab{b}})Karras, Aittala, Aila, and Laine]{karras2022elucidating}
Karras, T., Aittala, M., Aila, T., and Laine, S.
\newblock {E}lucidating the {D}esign {S}pace of {D}iffusion-based {G}enerative {M}odels.
\newblock In \emph{Adv. Neural Inf. Proc. Syst.}, volume~35, pp.\  26565--26577, 2022{\natexlab{b}}.

\bibitem[Kim et~al.(2024)Kim, Lai, Liao, Murata, Takida, Uesaka, He, Mitsufuji, and Ermon]{kim2023consistency}
Kim, D., Lai, C.-H., Liao, W.-H., Murata, N., Takida, Y., Uesaka, T., He, Y., Mitsufuji, Y., and Ermon, S.
\newblock Consistency {T}rajectory {M}odels: {L}earning {P}robability {F}low {ODE} {T}rajectory of {D}iffusion.
\newblock In \emph{Int. Conf. Learn. Repr.}, 2024.

\bibitem[Kingma \& Gao(2024)Kingma and Gao]{Kingma--Gao2024}
Kingma, D. and Gao, R.
\newblock Understanding {D}iffusion {O}bjectives as the {ELBO} with simple {D}ata {A}ugmentation.
\newblock In \emph{Adv. Neural Inf. Proc. Syst.}, volume~36, 2024.

\bibitem[Kingma et~al.(2021)Kingma, Salimans, Poole, and Ho]{Kingma--Salimans--Poole--Ho2021}
Kingma, D., Salimans, T., Poole, B., and Ho, J.
\newblock Variational {D}iffusion {M}odels.
\newblock In \emph{Adv. Neural Inf. Proc. Syst.}, volume~34, pp.\  21696--21707, 2021.

\bibitem[Kingma(2014)]{kingma2013auto}
Kingma, D.~P.
\newblock {A}uto-encoding {V}ariational {B}ayes.
\newblock In \emph{Int. Conf. Learn. Repr.}, 2014.

\bibitem[Kong et~al.(2023)Kong, Brekelmans, and Steeg]{Kong--Brekelmans--Steeg2023}
Kong, X., Brekelmans, R., and Steeg, G.~V.
\newblock Information-theoretic {D}iffusion.
\newblock In \emph{Int. Conf. Learn. Repr.}, 2023.

\bibitem[Krizhevsky et~al.(2009)Krizhevsky, Hinton, et~al.]{Krizhevsky--Hinton2009}
Krizhevsky, A., Hinton, G., et~al.
\newblock Learning {M}ultiple {L}ayers of {F}eatures from {T}iny {I}mages.
\newblock Technical report, U. Toronto, 2009.

\bibitem[Le~Cam(2012)]{LeCam2012}
Le~Cam, L.
\newblock \emph{Asymptotic methods in statistical decision theory}.
\newblock Springer Science \& Business Media, 2012.

\bibitem[Lee et~al.(2024)Lee, Lin, and Fanti]{lee2024improving}
Lee, S., Lin, Z., and Fanti, G.
\newblock {I}mproving the {T}raining of {R}ectified {F}lows.
\newblock In \emph{Adv. Neural Inf. Proc. Syst.}, volume~37, pp.\  63082--63109, 2024.

\bibitem[Lipman et~al.(2023)Lipman, Chen, Ben-Hamu, Nickel, and Le]{Lipman--Chen--Ben-Hamu--Nickel--Le2022}
Lipman, Y., Chen, R.~T., Ben-Hamu, H., Nickel, M., and Le, M.
\newblock Flow {M}atching for {G}enerative {M}odeling.
\newblock In \emph{Int. Conf. Learn. Repr.}, 2023.

\bibitem[Liu et~al.(2022)Liu, Gong, and Liu]{liu2022flow}
Liu, X., Gong, C., and Liu, Q.
\newblock {F}low {S}traight and {F}ast: {L}earning to {G}enerate and {T}ransfer {D}ata with {R}ectified {F}low.
\newblock \emph{arXiv preprint arXiv:2209.03003}, 2022.

\bibitem[Liu et~al.(2023)Liu, Gong, and Liu]{Liu--Gong--Liu2023}
Liu, X., Gong, C., and Liu, Q.
\newblock Flow {S}traight and {F}ast: {L}earning to {G}enerate and {T}ransfer {D}ata with {R}ectified {F}low.
\newblock In \emph{Int. Conf. Learn. Repr.}, 2023.

\bibitem[Lu et~al.(2022{\natexlab{a}})Lu, Zhou, Bao, Chen, Li, and Zhu]{Lu--Zhou--Bao--Chen--Li--Zhu2022}
Lu, C., Zhou, Y., Bao, F., Chen, J., Li, C., and Zhu, J.
\newblock {DPM}-solver: {A} {F}ast {ODE} {S}olver for {D}iffusion {P}robabilistic {M}odel {S}ampling in {A}round 10 {S}teps.
\newblock In \emph{Adv. Neural Inf. Proc. Syst.}, volume~35, pp.\  5775--5787, 2022{\natexlab{a}}.

\bibitem[Lu et~al.(2022{\natexlab{b}})Lu, Zhou, Bao, Chen, Li, and Zhu]{lu2022dpm}
Lu, C., Zhou, Y., Bao, F., Chen, J., Li, C., and Zhu, J.
\newblock {DPM}-{S}olver: {A} {F}ast {ODE} {S}olver for {D}iffusion {P}robabilistic {M}odel {S}ampling in {A}round 10 {S}teps.
\newblock In \emph{Adv. Neural Inf. Proc. Syst.}, volume~35, pp.\  5775--5787, 2022{\natexlab{b}}.

\bibitem[Lu et~al.(2022{\natexlab{c}})Lu, Zhou, Bao, Chen, Li, and Zhu]{lu2022dpmpp}
Lu, C., Zhou, Y., Bao, F., Chen, J., Li, C., and Zhu, J.
\newblock {DPM}-{S}olver++: {F}ast {S}olver for {G}uided {S}ampling of {D}iffusion {P}robabilistic {M}odels.
\newblock \emph{arXiv preprint arXiv:2211.01095}, 2022{\natexlab{c}}.

\bibitem[Luhman \& Luhman(2021)Luhman and Luhman]{luhman2021knowledge}
Luhman, E. and Luhman, T.
\newblock {K}nowledge {D}istillation in {I}terative {G}enerative {M}odels for {I}mproved {S}ampling {S}peed.
\newblock \emph{arXiv preprint arXiv:2101.02388}, 2021.

\bibitem[Luo et~al.(2024{\natexlab{a}})Luo, Hu, Zhang, Sun, Li, and Zhang]{luo2024diff}
Luo, W., Hu, T., Zhang, S., Sun, J., Li, Z., and Zhang, Z.
\newblock {D}iff-{I}nstruct: {A} {U}niversal {A}pproach for {T}ransferring {K}nowledge from {P}re-{T}rained {D}iffusion {M}odels.
\newblock In \emph{Adv. Neural Inf. Proc. Syst.}, volume~36, 2024{\natexlab{a}}.

\bibitem[Luo et~al.(2024{\natexlab{b}})Luo, Huang, Geng, Kolter, and Qi]{luoone}
Luo, W., Huang, Z., Geng, Z., Kolter, J.~Z., and Qi, G.-J.
\newblock One-step {D}iffusion {D}istillation through {S}core {I}mplicit {M}atching.
\newblock In \emph{Adv. Neural Inf. Proc. Syst.}, volume~37, pp.\  115377--115408, 2024{\natexlab{b}}.

\bibitem[Mao et~al.(2017)Mao, Li, Xie, Lau, Wang, and Paul~Smolley]{Mao--Li--Xie--Lau--Wang--Paul2017}
Mao, X., Li, Q., Xie, H., Lau, R.~Y., Wang, Z., and Paul~Smolley, S.
\newblock Least {S}quares {G}enerative {A}dversarial {N}etworks.
\newblock In \emph{Proc. {IEEE} Comput. Soc. Conf. Comput. Vis. Pattern Recognit.}, pp.\  2794--2802, 2017.

\bibitem[Meng et~al.(2023)Meng, Rombach, Gao, Kingma, Ermon, Ho, and Salimans]{meng2023distillation}
Meng, C., Rombach, R., Gao, R., Kingma, D., Ermon, S., Ho, J., and Salimans, T.
\newblock {O}n {D}istillation of {G}uided {D}iffusion {M}odels.
\newblock In \emph{Proc. {IEEE} Comput. Soc. Conf. Comput. Vis. Pattern Recognit.}, pp.\  14297--14306, 2023.

\bibitem[Miyato et~al.(2018)Miyato, Kataoka, Koyama, and Yoshida]{Miyato--Kataoka--Koyama--Yoshida2018}
Miyato, T., Kataoka, T., Koyama, M., and Yoshida, Y.
\newblock Spectral {N}ormalization for {G}enerative {A}dversarial {N}etworks.
\newblock In \emph{Int. Conf. Learn. Repr.}, 2018.

\bibitem[Nichol \& Dhariwal(2021)Nichol and Dhariwal]{Nichol--Dhariwal2021}
Nichol, A.~Q. and Dhariwal, P.
\newblock Improved {D}enoising {D}iffusion {P}robabilistic {M}odels.
\newblock In \emph{Proc. Int. Conf. Mach. Learn.}, pp.\  8162--8171. PMLR, 2021.

\bibitem[Nielsen(2010)]{Nielsen2010}
Nielsen, F.
\newblock {A} {F}amily of {S}tatistical {S}ymmetric {D}ivergences based on {J}ensen's {I}nequality.
\newblock \emph{arXiv preprint arXiv:1009.4004}, 2010.

\bibitem[Nowozin et~al.(2016)Nowozin, Cseke, and Tomioka]{Nozowin--Cseke--Tomioka2016}
Nowozin, S., Cseke, B., and Tomioka, R.
\newblock f-{GAN}: {T}raining {G}enerative {N}eural {S}amplers using {V}ariational {D}ivergence {M}inimization.
\newblock In \emph{Adv. Neural Inf. Proc. Syst.}, volume~29, 2016.

\bibitem[Polyanskiy \& Wu(2019)Polyanskiy and Wu]{Polyanskiy--Wu2019}
Polyanskiy, Y. and Wu, Y.
\newblock Lecture notes on information theory, 2019.
\newblock URL \url{http://www.stat.yale.edu/~yw562/teaching/itlectures.pdf}.

\bibitem[Robbins(1956)]{Robbins1956AnEB}
Robbins, H.~E.
\newblock {A}n {E}mpirical {B}ayes {A}pproach to {S}tatistics.
\newblock \emph{Proc. Berkeley Symp. Math. Stat. Probab.}, 1956.

\bibitem[Salimans \& Ho(2022)Salimans and Ho]{salimans2022progressive}
Salimans, T. and Ho, J.
\newblock {P}rogressive {D}istillation for {F}ast {S}ampling of {D}iffusion {M}odels.
\newblock In \emph{Int. Conf. Learn. Repr.}, 2022.

\bibitem[Salimans et~al.(2024)Salimans, Mensink, Heek, and Hoogeboom]{salimans2024multistep}
Salimans, T., Mensink, T., Heek, J., and Hoogeboom, E.
\newblock {M}ultistep {D}istillation of {D}iffusion {M}odels via {M}oment {M}atching.
\newblock In \emph{Adv. Neural Inf. Proc. Syst.}, 2024.

\bibitem[Sauer et~al.(2022)Sauer, Schwarz, and Geiger]{Sauer--Schwarz--Geiger2022}
Sauer, A., Schwarz, K., and Geiger, A.
\newblock {S}tylegan-{XL}: {S}caling {S}tylegan to {L}arge {D}iverse {D}atasets.
\newblock In \emph{ACM SIGGRAPH Conf. Proc.}, number~49, pp.\  1--10, 2022.

\bibitem[Sohl-Dickstein et~al.(2015)Sohl-Dickstein, Weiss, Maheswaranathan, and Ganguli]{Sohl-Dickstein--Weiss--Mehswaranathan--Ganguli}
Sohl-Dickstein, J., Weiss, E., Maheswaranathan, N., and Ganguli, S.
\newblock Deep {U}nsupervised {L}earning using {N}onequilibrium {T}hermodynamics.
\newblock In \emph{Proc. Int. Conf. Mach. Learn.}, pp.\  2256--2265. PMLR, 2015.

\bibitem[Song et~al.(2021{\natexlab{a}})Song, Meng, and Ermon]{Song--Meng--Ermon2021}
Song, J., Meng, C., and Ermon, S.
\newblock Denoising {D}iffusion {I}mplicit {M}odels.
\newblock In \emph{Int. Conf. Learn. Repr.}, 2021{\natexlab{a}}.

\bibitem[Song \& Dhariwal(2024{\natexlab{a}})Song and Dhariwal]{song2024improved}
Song, Y. and Dhariwal, P.
\newblock Improved {T}echniques for {T}raining {C}onsistency {M}odels.
\newblock In \emph{Int. Conf. Learn. Repr.}, 2024{\natexlab{a}}.

\bibitem[Song \& Dhariwal(2024{\natexlab{b}})Song and Dhariwal]{songimproved}
Song, Y. and Dhariwal, P.
\newblock {I}mproved {T}echniques for {T}raining {C}onsistency {M}odels.
\newblock In \emph{Int. Conf. Learn. Repr.}, 2024{\natexlab{b}}.

\bibitem[Song \& Ermon(2019)Song and Ermon]{Song--Ermon2019}
Song, Y. and Ermon, S.
\newblock Generative {M}odeling by {E}stimating {G}radients of the {D}ata {D}istribution.
\newblock In \emph{Adv. Neural Inf. Proc. Syst.}, volume~32, 2019.

\bibitem[Song et~al.(2020)Song, Garg, Shi, and Ermon]{Song--Garg--Shi--Ermon2020}
Song, Y., Garg, S., Shi, J., and Ermon, S.
\newblock Sliced {S}core {M}atching: A {S}calable {A}pproach to {D}ensity and {S}core {E}stimation.
\newblock In \emph{Proc. Conf. Uncertainty Artif. Intell.}, pp.\  574--584. PMLR, 2020.

\bibitem[Song et~al.(2021{\natexlab{b}})Song, Sohl-Dickstein, Kingma, Kumar, Ermon, and Poole]{Song--Sohl-Dickstein--Kingma--Kumar--Ermano--Poole2020}
Song, Y., Sohl-Dickstein, J., Kingma, D.~P., Kumar, A., Ermon, S., and Poole, B.
\newblock Score-based {G}enerative {M}odeling through {S}tochastic {D}ifferential {E}quations.
\newblock In \emph{Int. Conf. Learn. Repr.}, 2021{\natexlab{b}}.

\bibitem[Song et~al.(2023)Song, Dhariwal, Chen, and Sutskever]{song2023consistency}
Song, Y., Dhariwal, P., Chen, M., and Sutskever, I.
\newblock {C}onsistency {M}odels.
\newblock In \emph{Proc. Int. Conf. Mach. Learn.}, volume 202, pp.\  32211--32252. PMLR, 23--29 Jul 2023.

\bibitem[Vincent(2011)]{vincent2011connection}
Vincent, P.
\newblock {A} {C}onnection {B}etween {S}core {M}atching and {D}enoising {A}utoencoders.
\newblock \emph{Neural Comput.}, 23\penalty0 (7):\penalty0 1661--1674, 2011.

\bibitem[Wang et~al.(2023)Wang, Zheng, He, Chen, and Zhou]{Wang--Zheng--He--Chen--Zhou2023}
Wang, Z., Zheng, H., He, P., Chen, W., and Zhou, M.
\newblock Diffusion-{GAN}: {T}raining {GAN}s with {D}iffusion.
\newblock In \emph{Int. Conf. Learn. Repr.}, 2023.

\bibitem[Xie et~al.(2024)Xie, Xiao, Kingma, Hou, Wu, Murphy, Salimans, Poole, and Gao]{xie2024distillation}
Xie, S., Xiao, Z., Kingma, D.~P., Hou, T., Wu, Y.~N., Murphy, K.~P., Salimans, T., Poole, B., and Gao, R.
\newblock {EM} {D}istillation for {O}ne-{S}tep {D}iffusion {M}odels.
\newblock \emph{arXiv Preprint arXiv:2405.16852}, 2024.

\bibitem[Yin et~al.(2024{\natexlab{a}})Yin, Gharbi, Park, Zhang, Shechtman, Durand, and Freeman]{yin2024improved}
Yin, T., Gharbi, M., Park, T., Zhang, R., Shechtman, E., Durand, F., and Freeman, W.~T.
\newblock {I}mproved {D}istribution {M}atching {D}istillation for {F}ast {I}mage {S}ynthesis.
\newblock In \emph{Adv. Neural Inf. Proc. Syst.}, 2024{\natexlab{a}}.

\bibitem[Yin et~al.(2024{\natexlab{b}})Yin, Gharbi, Zhang, Shechtman, Durand, Freeman, and Park]{yin2024onestep}
Yin, T., Gharbi, M., Zhang, R., Shechtman, E., Durand, F., Freeman, W.~T., and Park, T.
\newblock {O}ne-{S}tep {D}iffusion with {D}istribution {M}atching {D}istillation.
\newblock In \emph{Proc. {IEEE} Comput. Soc. Conf. Comput. Vis. Pattern Recognit.}, 2024{\natexlab{b}}.

\bibitem[Zhou et~al.(2024)Zhou, Zheng, Wang, Yin, and Huang]{zhou2024score}
Zhou, M., Zheng, H., Wang, Z., Yin, M., and Huang, H.
\newblock Score {I}dentity {D}istillation: {E}xponentially {F}ast {D}istillation of {P}retrained {D}iffusion {M}odels for {O}ne-{S}tep {G}eneration.
\newblock In \emph{Proc. Int. Conf. Mach. Learn.}, 2024.

\end{thebibliography}
\bibliographystyle{icml2025}

\newpage
\appendix
\onecolumn

\addtocontents{toc}{\protect\StartAppendixEntries}
\listofatoc

\section{Deferred Statements and Proofs}
\label{sec:appendix_a}

\subsection{Proof of Proposition~\ref{prop:gradient}}
\begin{proof}[Proof of Proposition~\ref{prop:gradient}]
We can simplify the gradient of each term separately as follows:
\begin{align*}
\nabla_\th \kl(q_\th \| \a p + (1 - \a) q_\th) 
&= \E_{q_\th(\xv)}\Bigl[\nabla_\th \log\frac{q_\th(\xv)}{\a p(\xv)+(1-\a)q_\th(\xv)}\Bigr]
+\E_{q(\zv)}\Bigl[\nabla_\th\gv_\th(\zv) (\sv_{\th;0}(\xv)-\sv_{\th;\a}(\xv))\Big|_{\xv=\gv_\th(\zv)}\Bigr],\\
\nabla_\th \kl(p \| \a p + (1 - \a) q_\th) &= -\E_{p(\xv)}\left[\nabla_\th \log (\a p(\xv) + (1-\a)q_\th(\xv))\right].
\end{align*}
Here, note that in the first expression, we invoke the chain rule: for some function $f_\th\suchthat \Xc\to\Real$, we have
\[
\nabla_\th f_\th(\gv_\th(\zv))
= (\nabla_\th f_\th(\xv))|_{\xv=\gv_\th(\zv)} + \nabla_\th \gv_\th(\zv)(\nabla_\xv f_\th(\xv))|_{\xv=\gv_\th(\zv)}.
\]
Combining these two terms with the weights, we get the gradient of the $\a$-skew JSD:
\begin{align*}
\nabla_\th\sjsd^{(\a)}(q_\th,p) 
&= \frac{1}{\a}\nabla_\th \kl(q_\th \| \a p + (1 - \a) q_\th) + \frac{1}{1-\a} \nabla_\th \kl(p \| \a p + (1 - \a) q_\th)\\
&= \frac{1}{\a}\E_{q(\zv)}\Bigl[\nabla_\th\gv_\th(\zv) (\sv_{\th;0}(\xv)-\sv_{\th;\a}(\xv))\Big|_{\xv=\gv_\th(\zv)}\Bigr]
\\&\qquad 
-\frac{1}{\a(1-\a)}\E_{\a p(\xv)+(1-\a)q_\th(\xv)}[\nabla_\th\log(\a p(\xv)+(1-\a)q_\th(\xv))]
\\&\qquad 
+\frac{1}{\a}\E_{q_\th(\xv)}[\nabla_\th\log q_\th(\xv)]\\
&= \frac{1}{\a}\E_{q(\zv)}\Bigl[\nabla_\th\gv_\th(\zv) (\sv_{\th;0}(\xv)-\sv_{\th;\a}(\xv))\Big|_{\xv=\gv_\th(\zv)}\Bigr].
\end{align*}
Here, we use the assumption that $\E_{q_\th(\xv)}[\nabla_\th \log q_\th(\xv)]= 0$.
\end{proof}

\subsection{Proof of Proposition~\ref{prop:interpolated_score}}

\begin{proof}[Proof of Proposition~\ref{prop:interpolated_score}]
We can write the objective $\Lc(\psi; \a)$ as 
\begin{align*}
\Lc(\psi; \a)
&= \int \Big\{(\a p(\xv)+(1-\a)q_\th(\xv))\|\sv_{\psi}(\xv;\a)\|^2 -2(\a p(\xv) s_p(\xv)+(1-\a)q_\th(\xv))^\intercal \sv_{\psi}(\xv;\a)\Bigr\} \diff\xv + C\\
&= \int (\a p(\xv)+(1-\a)q_\th(\xv))\Bigl\|\sv_{\psi}(\xv;\a)- \frac{\a p(\xv)\sv_p(\xv) + (1 - \a) q_\th(\xv)\sv_{q_\th}(\xv)}{\a p(\xv) + (1 - \a)q_\th(\xv)}\Bigr\|^2\diff \xv + C'.
\end{align*}
    Hence, it is clear that the global minimizer should be
    \begin{align*}
        \sv_{\psi^*}(\xv; \a) &= \frac{\a p(\xv)\sv_p(\xv) + (1 - \a) q_\th(\xv)\sv_{q_\th}(\xv)}{\a p(\xv) + (1 - \a)q_\th(\xv)}\\
            &= \frac{\a \nabla_{\xv} p(\xv) + (1 - \a) \nabla_{\xv}q_\th(\xv)}{\a p(\xv) + (1 - \a)q_\th(\xv)} \\
            &= \frac{\nabla_{\xv}(\a p(\xv) + (1 - \a)q_\th(\xv))}{\a p(\xv) + (1 - \a)q_\th(\xv)}\\
            &= \nabla_{\xv} \log (\a p(\xv) + (1-\a)q_\th(\xv)).\qedhere
    \end{align*}
\end{proof}

\subsection{Deferred Statements}

\begin{proposition}
\label{prop:interpolated_score_dsm}
Let $\a \in [0, 1]$ be fixed and $\sigma_t$ be some fixed noise level. 
Then, the minimizer of the objective function
\begin{align}
\Lc_\text{score}&(\psi; \a, t) 
\defeq \a\,\E_{p(\xv)q(\epsv)}[\|\sv_{\psi}(\xv_t; \a, t) + \epsv/\sigma_t\|^2] 
+ (1 - \a)\,\E_{q_\th(\xv)q(\epsv)}[\|\sv_{\psi}(\xv_t; \a, t) + \epsv/\sigma_t\|^2]
\end{align}
satisfies 
\begin{equation*}
\sv_{\psi^*}(\xv_t; \a, t) = \sv_{\th;\a, t}(\xv_t). 
\end{equation*}
\end{proposition}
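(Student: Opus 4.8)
The plan is to reduce Proposition~\ref{prop:interpolated_score_dsm} to the non-noisy version already established in Proposition~\ref{prop:interpolated_score} by identifying the denoising score matching loss $\Lc_\text{score}(\psi;\a,t)$ with (a constant offset of) the explicit mixture score matching loss $\Lc(\psi;\a)$ evaluated at the noised distributions $p_t$ and $q_{\th,t}$. The key classical fact is Vincent's identity~\citep{vincent2011connection}: for a fixed clean distribution $\mu$ with $\xv\sim\mu$, $\xv_t=\xv+\sigma_t\epsv$, $\epsv\sim\Nc(\mathbf{0},\mathbf{I})$, and $\mu_t$ the marginal of $\xv_t$, one has
\[
\E_{\mu(\xv)q(\epsv)}\bigl[\|\sv(\xv_t)+\epsv/\sigma_t\|^2\bigr]
= \E_{\mu_t(\xv_t)}\bigl[\|\sv(\xv_t)-\nabla_{\xv_t}\log\mu_t(\xv_t)\|^2\bigr] + c_\mu,
\]
where $c_\mu$ does not depend on the function $\sv$, because $-\epsv/\sigma_t=\nabla_{\xv_t}\log q(\xv_t|\xv)$ is the conditional score of the Gaussian transition.

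First I would apply this identity twice: once with $\mu=p$ (giving $\sv_p(\xv_t;t)=\nabla_{\xv_t}\log p_t(\xv_t)$ as the target) and once with $\mu=q_\th$ (giving $\sv_{q_\th}(\xv_t;t)=\nabla_{\xv_t}\log q_{\th,t}(\xv_t)$). Then, taking the convex combination with weights $\a$ and $1-\a$ exactly as in the definition of $\Lc_\text{score}(\psi;\a,t)$, I obtain
\[
\Lc_\text{score}(\psi;\a,t)
= \a\,\E_{p_t}\bigl[\|\sv_\psi(\xv_t;\a,t)-\sv_p(\xv_t;t)\|^2\bigr]
+ (1-\a)\,\E_{q_{\th,t}}\bigl[\|\sv_\psi(\xv_t;\a,t)-\sv_{q_\th}(\xv_t;t)\|^2\bigr] + c,
\]
with $c=\a c_p+(1-\a)c_{q_\th}$ independent of $\psi$. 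The right-hand side (modulo the additive constant) is precisely the objective $\Lc(\psi;\a)$ from Proposition~\ref{prop:interpolated_score}, but with $p$ replaced by $p_t$ and $q_\th$ replaced by $q_{\th,t}$, and with $\sv_\psi(\cdot;\a)$ replaced by the noise-conditioned model $\sv_\psi(\cdot;\a,t)$.

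Invoking Proposition~\ref{prop:interpolated_score} with this substitution, the minimizer over $\sv_\psi(\cdot;\a,t)$ is $\nabla_{\xv_t}\log(\a p_t(\xv_t)+(1-\a)q_{\th,t}(\xv_t))$, which is exactly $\sv_{\th;\a,t}(\xv_t)$ by its definition in Sec.~\ref{sec:multi_noise_alpha}. Since the additive constant $c$ does not affect the argmin, this completes the proof.

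\textbf{Main obstacle.} There is no deep obstacle here; the only point requiring care is the bookkeeping of the $\psi$-independent constants when applying Vincent's identity at each of the two clean distributions and then combining, and verifying that the expectations are finite (so that the constants are genuinely well-defined and can be dropped). One should also note implicitly the usual regularity caveat that $p_t$, $q_{\th,t}$ are smooth densities with well-defined scores — which is automatic since they are Gaussian convolutions — so the characterization of the minimizer is unambiguous.
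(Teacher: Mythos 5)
Your proof is correct, but it takes a different route from the paper's. The paper argues directly: it rewrites $\Lc_\text{score}(\psi;\a,t)$ as a single expectation over the mixture $\a p_t + (1-\a)q_{\th,t}$, identifies the minimizer as the conditional mean $-\tfrac{1}{\sigma_t}\E_{\a p_t+(1-\a)q_{\th,t}}[\epsv\mid\xv_t]$ (an MMSE characterization), and then converts that conditional mean into the marginal score of the mixture via Tweedie's formula. You instead apply Vincent's denoising-score-matching identity separately to $p$ and to $q_\th$, combine with weights $\a$ and $1-\a$ to show the objective equals (up to a $\psi$-independent constant) the explicit mixture score-matching loss of Proposition~\ref{prop:interpolated_score} evaluated at the noised marginals $p_t$ and $q_{\th,t}$, and then invoke that proposition. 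Both arguments are sound and rest on the same underlying facts (quadratic losses are minimized by conditional expectations; the posterior mean of the noise relates to the marginal score). Your route has the virtue of reusing Proposition~\ref{prop:interpolated_score} as a black box and making explicit that the DSM loss is the noised-marginal analogue of the clean mixture loss, which is conceptually clarifying; the paper's route is more self-contained and avoids tracking the two additive constants. One small point worth keeping in a final write-up: when you substitute $p_t, q_{\th,t}$ into Proposition~\ref{prop:interpolated_score}, the resulting minimizer $\nabla_{\xv_t}\log(\a p_t(\xv_t)+(1-\a)q_{\th,t}(\xv_t))$ matches the paper's $\sv_{\th;\a,t}(\xv_t)$ only because the paper's notation $p(\xv_t)$, $q_\th(\xv_t)$ in the definition of $\sv_{\th;\a,t}$ denotes the noised marginals; you handle this correctly but it deserves an explicit sentence.
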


\begin{proof}
We can write the objective $\Lc(\psi; \a, t)$ as 
\begin{align*}
\Lc_\text{score}(\psi; \a, t)
&= \iint (\a p(\xv_t)+(1-\a)q_\th(\xv_t))\Bigl\|\sv_{\psi}(\xv_t; \a, t) + \frac{\epsv}{\sigma_t}\Bigr\|^2\diff \xv \diff \epsv.
\end{align*}
This is a standard minimum mean square estimation (MMSE) problem for which the global minimizer is the conditional mean,
\begin{align*}
    \sv_{\psi^*}(\xv_t; \a, t) &= -\frac{1}{\sigma_t}\E_{\a p_t + (1 - \a)q_{\th, t}}\left[\epsv| \xv_t\right] \\
    &= -\frac{1}{\sigma_t^2}\E_{\a p_t + (1 - \a)q_{\th, t}}\left[\xv_t - \xv| \xv_t\right] \\
    &= -\frac{1}{\sigma_t^2}\xv_t + \frac{1}{\sigma_t^2}\E_{\a p_t + (1 - \a)q_{\th, t}}\left[\xv| \xv_t\right] \\
    &= \nabla_{\xv_t} \log (\a p(\xv_t) + (1-\a)q_\th(\xv_t)).
\end{align*}
Here we use that $\xv_t = \xv + \sigma_t \epsv$ and make the connection to the marginal score in the last line using Tweedie's formula~\citep{Robbins1956AnEB}.
\end{proof}

\begin{proposition}
    \label{prop:alternate_parametrization}
    Let $\a \in [0, 1]$, $\sv_p(\xv)$ be the data score, $\sv_{q_\th}(\xv)$ be the score of the generated samples.  Then, the score of the mixture distribution can be expressed as
    \begin{equation}
    \sv_{\th;\a}(\xv) 
    = D_{\th;\a}(\xv)\sv_p(\xv) + (1-D_{\th;\a}(\xv))\sv_{q_\th}(\xv),
    \end{equation}
    where
    \begin{equation}
    D_{\th;\a}(\xv)\defeq \sigma\Bigl(\log\frac{p(\xv)}{q_\th(\xv)}+\log\frac{\a}{1-\a}\Bigr),
    \end{equation}
\end{proposition}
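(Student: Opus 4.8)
The plan is a direct computation starting from the definition $\sv_{\th;\a}(\xv) = \nabla_\xv\log(\a p(\xv) + (1-\a)q_\th(\xv))$. First I would differentiate the logarithm,
\[
\sv_{\th;\a}(\xv) = \frac{\nabla_\xv(\a p(\xv) + (1-\a)q_\th(\xv))}{\a p(\xv) + (1-\a)q_\th(\xv)} = \frac{\a \nabla_\xv p(\xv) + (1-\a)\nabla_\xv q_\th(\xv)}{\a p(\xv) + (1-\a)q_\th(\xv)},
\]
and then substitute $\nabla_\xv p(\xv) = p(\xv)\sv_p(\xv)$ and $\nabla_\xv q_\th(\xv) = q_\th(\xv)\sv_{q_\th}(\xv)$, which is just the definition $\sv_p = \nabla_\xv\log p$, $\sv_{q_\th} = \nabla_\xv\log q_\th$. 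This turns the numerator into $\a p(\xv)\sv_p(\xv) + (1-\a)q_\th(\xv)\sv_{q_\th}(\xv)$; in fact this intermediate identity is exactly the chain of equalities already carried out in the proof of Proposition~\ref{prop:interpolated_score}, so that proof can be cited to shortcut this step.

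Next I would split the single fraction into the two terms multiplying $\sv_p(\xv)$ and $\sv_{q_\th}(\xv)$:
\[
\sv_{\th;\a}(\xv) = \frac{\a p(\xv)}{\a p(\xv) + (1-\a)q_\th(\xv)}\,\sv_p(\xv) + \frac{(1-\a)q_\th(\xv)}{\a p(\xv) + (1-\a)q_\th(\xv)}\,\sv_{q_\th}(\xv).
\]
Writing $D_{\th;\a}(\xv) \defeq \frac{\a p(\xv)}{\a p(\xv) + (1-\a)q_\th(\xv)}$, the coefficient of $\sv_{q_\th}(\xv)$ is $\frac{(1-\a)q_\th(\xv)}{\a p(\xv) + (1-\a)q_\th(\xv)} = 1 - D_{\th;\a}(\xv)$, giving the claimed convex-combination form $\sv_{\th;\a}(\xv) = D_{\th;\a}(\xv)\sv_p(\xv) + (1 - D_{\th;\a}(\xv))\sv_{q_\th}(\xv)$.

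Finally I would rewrite $D_{\th;\a}(\xv)$ in logistic form by dividing numerator and denominator by $\a p(\xv)$:
\[
D_{\th;\a}(\xv) = \frac{1}{1 + \frac{(1-\a)q_\th(\xv)}{\a p(\xv)}} = \frac{1}{1 + \exp\!\bigl(-\bigl(\log\tfrac{p(\xv)}{q_\th(\xv)} + \log\tfrac{\a}{1-\a}\bigr)\bigr)} = \sigma\Bigl(\log\tfrac{p(\xv)}{q_\th(\xv)} + \log\tfrac{\a}{1-\a}\Bigr),
\]
using $\sigma(u) = 1/(1+e^{-u})$. I do not anticipate any real obstacle: the statement is an elementary algebraic rearrangement. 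The only points needing a word of care are the regularity assumptions (differentiability of $p$ and $q_\th$ with scores defined on the relevant support, and $q_\th(\xv) > 0$ so the ratio makes sense) and the boundary cases $\a \in \{0,1\}$, where one mixture weight vanishes and the expression degenerates gracefully to $\sv_{q_\th}$ or $\sv_p$ (with $\log\tfrac{\a}{1-\a}$ read as $\mp\infty$).
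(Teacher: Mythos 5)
Your proposal is correct and follows essentially the same route as the paper's proof: differentiate the log of the mixture, split the resulting fraction into the two weights $D_{\th;\a}$ and $1-D_{\th;\a}$, and rewrite $D_{\th;\a}$ in sigmoid form. The only difference is that you spell out the sigmoid rewriting and the boundary/regularity caveats, which the paper leaves implicit.
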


\begin{proof}
The amortized score can be expressed as
\begin{align*}
    \nabla_\xv \log (\a p(\xv) + (1 - \a) q_\th(\xv)) &= \frac{\nabla_\xv (\a p(\xv) + (1 - \a) q_\th(\xv))}{\a p(\xv) + (1 - \a) q_\th(\xv)} \\
    &= \frac{\a p(\xv)}{\a p(\xv) + (1 - \a) q_\th(\xv)} \nabla_\xv \log p(\xv) +  \frac{(1 - \a)q_\th(\xv)}{\a p(\xv) + (1 - \a) q_\th(\xv)} \nabla_\xv \log q_\th(\xv) \\
    &= D(\xv; \a) \nabla_\xv \log p(\xv) + (1 - D(\xv; \a))\nabla_\xv \log q_\th(\xv).
\end{align*}
We can now simplify the scaling factor as
\begin{align*}
    D(\xv; \a) &= \frac{\a p(\xv)}{\a p(\xv) + (1 - \a) q_\th(\xv)} 
    = \sigma\left(\log \frac{p(\xv)}{q_\th(\xv)} + \log \frac{\a}{1 - \a}\right).\qedhere
\end{align*}    
\end{proof}

\section{Detailed Discussions on Related Work}
\label{sec:related_work}

\subsection{Diffusion Models}
\label{app: diffusion background}

\textbf{Prior Work.} \citet{Sohl-Dickstein--Weiss--Mehswaranathan--Ganguli} first introduced diffusion probabilistic models (DPMs) as deep variational autoencoders \cite{kingma2013auto} based on the principles of thermodynamic diffusion with a Markov-chain variational posterior that maximizes the evidence lower bound (ELBO).  Several years later, \citet{Ho--Jain--Abbeel2020} re-introduced DPMs (DDPMs) with modern neural network architectures and a simplified loss function that set a new state-of-the-art in image generation. Since then, numerous connections to existing literature in statistics, information theory and stochastic differential equations (SDEs) have helped bolster the quality of these models. For example, \citet{Song--Ermon2019} illustrate the equivalence between DDPMs and DSM at multiple noise levels, thus bridging the areas of diffusion-based models and score-based models.  Subsequently, \citet{Song--Sohl-Dickstein--Kingma--Kumar--Ermano--Poole2020} showed that in continuous time, DPMs can be appropriately interpreted as solving for the reverse of a noising process that evolves as an SDE while \citet{Kingma--Salimans--Poole--Ho2021} demonstrated that continuous-time DPMs can interpreted as VAEs and that the variational lower bound is invariant to the noise schedule except for its endpoints, thus bolstering its density estimation capabilities.  Following the latter discovery, \citet{Kong--Brekelmans--Steeg2023} show that DPMs can in-fact be used for \emph{exact} likelihood computation by leveraging techniques from information theory. To further improve DPMs, extensive research has gone into the choice of noise schedules, network architectures and loss functions \cite{Nichol--Dhariwal2021, Hoogeboom--Heek--Salimans2023, Karras--Aittala--Aila--Laine2022, Kingma--Gao2024}. Many tangentially discovered frameworks such as rectified flows \cite{Liu--Gong--Liu2023} and conditional normalizing flows trained with Gaussian conditional flow matching \cite{Lipman--Chen--Ben-Hamu--Nickel--Le2022}, are also particular instances of (Gaussian) diffusion models with specialized noise schedules and weighted loss functions, as show in \cite{Kingma--Gao2024}. 

\textbf{Formulation.} We take the following unified view in our definition of DPMs as inspired by \cite{Kingma--Gao2024} and \cite{Karras--Aittala--Aila--Laine2022}. Let $p(\xv)$ be the data distribution and let $\lambda(t)$ define a \textit{variance exploding} noise schedule with distribution $p(t)$ where $t  \sim \Uc(0, 1)$.  Under this noise schedule we can define a noisy version of $\xv$ at noise level $\sigma_t$ as
\begin{equation}
    \xv_{t} \defeq \xv + \sigma_{t} \epsv \quad \text{where} \quad \epsv \sim \Nc(0, \mathbf{I}).
    \label{eq:forward process}
\end{equation}
 Given noisy samples of data, the diffusion objective can be reduced to a weighted denoising objective,
\begin{equation}
    \Lc_{\dpm}(\epsv_{\th}) = \half \E_{p(t)p(\xv)q(\epsv)} \left[w(t)\|\epsv - \epsv_{\th}(\xv_t; t)\|^2\right],
    \label{eq:dpm}
\end{equation}
where $w(t)$ is a positive scalar-valued weighting function. Note that for the forward process defined in Eq.~\eqref{eq:forward process}, the conditional score is $\sv(\xv_t|\xv) = -\epsv / \sigma_t$. Thus, Eq.~\eqref{eq:dpm} can be interpreted as a weighted denoising score matching loss \cite{vincent2011connection} over multiple noise levels,
\begin{equation}
    \Lc_{\dpm}(\epsv_{\th}) = \half \E_{p(t)p(\xv)p(\xv_t|\xv)} \left[w'(t)\bigg\|\sv(\xv_t|\xv) + \frac{\epsv_{\th}(\xv_t; t)}{\sigma_t}\bigg\|^2\right],
    \label{eq:weighted-dsm}
\end{equation}
where $w'(t) \defeq \sigma_t^2 w(t)$ and the marginal score estimator is $\sv_{\th}(\xv_t; t) \defeq -\epsv_{\th}(\xv_t; t) / \sigma_t$.

\textbf{Sampling.} It is often beneficial to view DPMs as SDEs \cite{Song--Sohl-Dickstein--Kingma--Kumar--Ermano--Poole2020} where the forward process can be expressed as
\begin{equation*}
    \text{d}\xv_t = \fv(\xv_t, t)\text{d}t + g(t)\text{d}\wv,
\end{equation*}
where $\wv$ is a standard Wiener process and $\xv_0 = \xv$.  The time reversal of this process (i.e., the generative process) is known to follow the reverse SDE,
\begin{equation*}
    \text{d}\xv_t = \left[\fv(\xv_t, t) - g^2(t)\nabla_{\xv_t}\log p(\xv_t)\right]\text{d}t + g(t)\text{d}\bar{\wv}.
\end{equation*}
Note that in practice $\nabla_{\xv_t}\log p(\xv_t)$ would be estimated by the score function $\sv_{\th}(\xv_t; t)$ from a variant of DSM as in Eq.~\eqref{eq:weighted-dsm}.

Sampling can be simulated through techniques such as annealed Langevin dynamics or ancestral sampling \cite{Song--Sohl-Dickstein--Kingma--Kumar--Ermano--Poole2020}. While the above reverse SDE is stochastic in nature, there also exists a deterministic process known as the \emph{probability flow ODE} that satisfies the same intermediate marginal distributions,
\begin{equation}
    \text{d}\xv_t = \left[\fv(\xv_t, t) - \half g^2(t)\nabla_{\xv_t}\log p(\xv_t)\right]\text{d}t.
    \label{eq: probability flow ode}
\end{equation}
The benefit of the ODE formulation is that it can discretized more coarsely and hence sampling can done in fewer timesteps. Furthermore, sampling is possible by plugging in the updates from Eq.~\eqref{eq: probability flow ode} into black-box ODE solvers, e.g., the Heun 2\textsuperscript{nd} order solver \cite{Karras--Aittala--Aila--Laine2022}. Sampling can be sped even further if Eq.~\eqref{eq: probability flow ode} can be solved exactly. \citet{Lu--Zhou--Bao--Chen--Li--Zhu2022} show that the exact solution to Eq.~\eqref{eq: probability flow ode} at timestep $t$ given an initial value at timestep $s < t$ is,
\begin{equation}
    \xv_t = \xv_s +  2\int_{\sigma_s}^{\sigma_t} \sigma_u \epsv_{\th}\left(\xv_u; u\right)\text{d}\sigma_u.
    \label{eq:pf-ode}
\end{equation}
Various samplers can be derived by approximating the exponentially weighted integral in different ways. 
 For example, the widely used DDIM sampler \cite{Song--Meng--Ermon2021} is an example of a first-order Taylor expansion of the integral term. At the core of all these algorithms is a score estimator/denoiser, which if learned accurately could improve the quality of samples produced.

\textbf{EDM Diffusion Architecture.} The  EDM preconditioning diffusion model utilizes a base DDPM++ architecture from \cite{Song--Sohl-Dickstein--Kingma--Kumar--Ermano--Poole2020} for CIFAR-10 and the ADM architecture \cite{Nichol--Dhariwal2021} for ImageNet $64\times64$. The EDM model uses a noise schedule that is defined as
\begin{equation}
    \log \sigma_t \sim \Nc(-1.2, 1.2^2).
    \label{eq:edm definitions}
\end{equation}
Rather than regressing against the unscaled additive noise as in DSM, EDM regresses against the original sample expressed in the following form,
\begin{equation}
    \xv = \frac{\sigma_{\text{data}}^2}{\sigma_t^2 + \sigma_{\text{data}}^2} \xv_t + \frac{\sigma_t\cdot \sigma_{\text{data}}}{\sqrt{\sigma_t^2 + \sigma_{\text{data}}^2}} \gv,
    \label{eq:edm target}
\end{equation}
where $ \sigma_{\text{data}} = 0.5$. To this end, EDM is parametrized with a denoising neural network,
\begin{equation}
    \fv_\th(\xv_t; t) = \frac{\sigma_{\text{data}}^2}{\sigma_t^2 + \sigma_{\text{data}}^2} \xv_t + \frac{\sigma_t\cdot \sigma_{\text{data}}}{\sqrt{\sigma_t^2 + \sigma_{\text{data}}^2}}\gv_\th(\xv_t; t),
\end{equation}
which is trained by minimizing
\begin{equation*}
    \min_{\th} \E_{p(\xv)q(\epsv)p(t)} [\tilde{w}(t)\|\xv - \fv_\th(\xv_t; t)\|^2],
\end{equation*}
where 
\begin{equation}
    w_{\text{EDM}}(t) = \frac{\sigma_t\sigma_{\text{data}}}{\sqrt{\sigma_t^2 + \sigma_{\text{data}}^2}}.
    \label{eq:edm weighting}
\end{equation}
This is equivalent to estimating $\gv$ by minimizing the objective,
\begin{equation}
    \Lc_{\text{EDM}}(\gv_{\theta}) \defeq \E_{p(\xv)q(\epsv)p(t)}\left[\|\gv - \gv_{\theta}\left(\xv_t; t\right)\|^2\right].
    \label{eq:edm objective}
\end{equation}
Using Eq.~\eqref{eq:edm definitions} and Eq.~\eqref{eq:edm target} we can show that,
\begin{align}
    \gv &= \frac{\sqrt{\sigma_t^2 + \sigma^2_{\text{data}}}}{\sigma_t\sigma_{\text{data}}} \xv - \frac{\sigma_{\text{data}}}{\sigma_t\sqrt{\sigma_t^2 + \sigma^2_{\text{data}}}} \xv_t\\
        &=-\frac{\sqrt{\sigma_t^2 + \sigma^2_{\text{data}}}}{\sigma_{\text{data}}} \epsv + \frac{\sigma_t}{\sqrt{\sigma_t^2 + \sigma^2_{\text{data}}}\sigma_{\text{data}}} \xv_t.
\end{align}
Therefore, in terms of Eq.~\eqref{eq:dpm} the EDM objective boils down to the unified diffusion objective with weighting function,
\begin{equation}
    w(t) =\frac{\sigma_t^2 + \sigma_{\text{data}}^2}{\sigma_{\text{data}}^2}.
    \label{eq:edm weights}
\end{equation}

\subsection{Diffusion Distillation}
\label{sec:diffusion_distillation_background}

Achieving state-of-the-art generation results on CIFAR-10 and ImageNet $64\times64$ using a Heun 2$^\text{nd}$ order sampler with the EDM architecture requires 35 and 512 function evaluations (FEs) respectively.  The goal of diffusion distillation is to distill a teacher model into a student model that can achieve high quality signal generation with few FEs.

The earliest works on distillation such as progressive distillation \cite{salimans2022progressive} and knowledge distillation \cite{huang2023knowledge} train a student diffusion model with drastically reduced sampling budget to match the performance of a teacher model that is simulated in reverse.  For example, given a teacher diffusion model parametrized as a denoiser $\fv_{\phi}$ and a noisy sample $\xv_t$, a ``clean'' target $\xv_\phi^{(k)}$ is constructed by running the teacher model for $k$ steps in reverse.  The student denoiser $\fv_{\th}$ is then optimized by minimizing the loss,
\begin{equation*}
    \Lc(\phi) \defeq \E_{p(\xv)p(\zv)p(t)}[w(t)\|\fv_{\th}(\xv_t; t) - \xv_\phi^{(k)}\|^2].
\end{equation*}
Knowledge distillation on the other hand conditions the student model on intermediate features from the teacher diffusion model so as to regularize the learned weights more effectively and retain knowledge from the teacher model.  These methods are expensive as it requires either simulating multiple steps of a teacher diffusion model or additionally probing it for feature extraction.

More recently a class of new diffusion distillation techniques grounded in reverse KL divergence minimization have gained popularity as discussed in Sec.~\ref{sec:preliminaries}. Diff-Instruct \cite{luo2024diff}, DMD \cite{yin2024onestep} and DMD2 \cite{yin2024improved} all train a one-step generator $\gv_\th$ mapping noise $\zv \sim \Nc(0, \mathbf{I})$ to generated samples by updating the generator in the direction of minimizing the reverse KLD,
\begin{align*}
    \nabla_\th \text{D}^{\text{avg}}_{\text{KL}}(q_\th \| p) = 
    \E_{ q(\zv)p(t)q(\epsv)}[\nabla_\th \gv_\th(\zv)(\sv_{q_\th}(\xv_t) - \sv_p(\xv_t)) \mid_{\xv = \gv_\th(\zv)}],
\end{align*}
where $\sv_p(\xv_t) = \nabla_{\xv_t} \log p(\xv_t)$ and $\sv_{q_\th}(\xv_t) = \nabla_{\xv_t} \log q_\th(\xv_t)$. Assuming that the score model was learned using a parametrization similar to EDM, DMD scales the gradient and uses Tweedie's formula \cite{Robbins1956AnEB} to express it in terms of a pretrained denoiser $\fv_\phi$ and a denoiser for the fake samples $\fv_{\psi}$,
\begin{align*}
    \nabla_\th \Lc_\text{DMD}(\th) = 
    \E_{ q(\zv)p(t)q(\epsv)}[ w_{\text{DMD}}(\xv_t, \xv, t)\nabla_\th \gv_\th(\zv)(\fv_{\psi}(\xv_t; t) - \fv_\phi(\xv_t; t)) \mid_{\xv = \gv_\th(\zv)}],
\end{align*}
where an adaptive weight is used to ensure that the scale of the gradient is roughly uniform across noise levels,
\begin{equation}
    w_{\text{DMD}}(\xv_t, \xv, t) \defeq \frac{\sigma^2_t}{\|\xv - \fv_\phi(\xv_t; t)\|_1}.
    \label{eq:dmd_weighting}
\end{equation}
To mitigate mode collapse and enhance sample diversity, DMD employs an ODE-based regularizer by simulating the pretrained diffusion model in reverse. This process generates noise-image pairs, which are then used to further supervise the generator's training. However, collecting this dataset becomes prohibitively expensive for high-dimensional samples. To address this limitation, DMD2 introduces a GAN-based regularizer, which effectively minimizes the Jensen-Shannon divergence alongside the reverse KLD, or a variant of the forward KLD when implemented in a non-saturating manner. For further details on GAN training, refer to Appendix \ref{sec:appendix_on_gan_training}.

Several methods build upon the divergence minimization framework by introducing regularizers based on alternative statistical distance measures. For instance, Moment Matching Distillation (MMD) \cite{salimans2024multistep}, Score Identity Distillation (SiD) \cite{zhou2024score}, and Score Implicit Matching (SiM) \cite{luoone} align the fake score model with the pretrained score model using a variant of the Fisher divergence:
\begin{equation*}
    \Lc_{\text{Fisher}}(\psi) \defeq \E_{q_\th(\xv)p(t)q(\epsv)}[w'(t)\|\fv_\psi(\xv_t; t) - {\sf sg}[\fv_\phi(\xv_t; t)]\|^2].
\end{equation*}
Here ${\sf sg}$ stands for the stop gradient operator.  Additionally, both SiD and SiM extend this approach to generator training by minimizing the Fisher divergence, which requires a computationally expensive gradient calculation through the entire score model. To address this, they employ statistical approximations to make these gradient computations more practical.

\subsection{Consistency Models}

Consistency models are a new class of generative models introduced by \citet{song2023consistency} that learn a consistency function between all points along the trajectory of the probability flow ODE of a reverse diffusion sampler. Concisely, given points along one such trajectory, $\xv_t, t \in [\eps, 1]$, where $\xv_1 \sim \Nc(0, \mathbf{I})$, the consistency function satisfies,
\[\fv(\xv_t, t) = 
\begin{cases}
    \xv & \text{if } t = \eps\\
    \fv(\xv_s, s) & s \in [\eps, 1]
\end{cases}
\]
Given the boundary condition at the origin, the consistency function can be parametrized using a neural network similar to EDM ,
\begin{equation*}
    \fv_\th(\xv_t, t) = \frac{\sigma_{\text{data}}^2}{(\sigma_t -\sigma_\eps)^2 + \sigma_{\text{data}}^2} \xv_t + \frac{(\sigma_t - \sigma_\eps)\cdot \sigma_{\text{data}}}{\sqrt{\sigma_t^2 + \sigma_{\text{data}}^2}}\gv_\th(\xv_t; t).
\end{equation*}
Given a noisy sample $\xv_t = \xv + \sigma_t \epsv, \epsv \sim \Nc(0, \mathbf{I})$, first a single step of the probability flow ODE is simulated using the Euler sampler by running one step of sampling using Eq.~\eqref{eq:pf-ode},
\begin{equation*}
    \xv_s = \xv_t + (t - s)t \nabla_{\xv_t} \log p(\xv_t)
\end{equation*}
This can be computed using either a pretrained score model or via a single sample Monte-Carlo estimate.  In the latter setting, it is important that the timesteps $s$ and $t$ are very close to each other for the approximation to hold. In consistency distillation a pretrained score model $\sv_\phi$ is available and a single sampling step along the PF-ODE is simulated as
\begin{equation*}
    \xv^\phi_s = \xv_t + (t - s)t \sv_\phi(\xv_t; t).
\end{equation*}
Then the consistency function is learned by minimizing
\begin{equation*}
    \Lc_{\text{CD}}(\th) = \E_{p(\xv)q(\epsv)p(t)}[w(t)d(\fv_\th(\xv_t; t), {\sf sg}[\fv_\th(\xv^\phi_{t-\Delta t}; t - \Delta t)])],
\end{equation*}
where $d$ is some distance measure, $w(t)$ is some positive weighting function and $s = t - \Delta t$, with $\Delta t$ some fixed timestep difference. \citet{song2023consistency} initially proposed using the LPIPS distance but subsequent works \cite{song2024improved, Geng--Pokle--Luo--Lin--Kolter2024} have shown that similar performance can be achieved by using the $\ell_2$ distance or a pseudo-Huber norm.

Unlike distillation techniques, consistency models can also be trained from scratch.  Assume that $s = t - \delta t, \delta t \rightarrow 0$.  Then, the sampling step can be approximated using Tweedie's formula \cite{Robbins1956AnEB},
\begin{align*}
        \xv_s &\approx \xv_t + (t - s) \frac{\xv - \xv_t}{t} \\
        &= \xv + s \epsv.
\end{align*}
Thus, the consistency function can now be learned by minimizing,
\begin{equation*}
    \Lc_{\text{CT}}(\th) = \E_{p(\xv)q(\epsv)p(t)}[w(t)d(\fv_\th(\xv + t \epsv; t), {\sf sg}[\fv_\th(\xv + (t - \delta t) \epsv; t - \delta t)])],
\end{equation*}
Consistency distillation still lags behind distillation methods based on reverse KL minimization, but consistency training often demonstrates more impressive results.  However, consistency training is still inherently unstable and requires careful design of both the noise schedule due to limiting nature of $\delta t$ and distance measure \cite{song2024improved, Geng--Pokle--Luo--Lin--Kolter2024}.  Stabilizing and making this objective simpler is the focus of a lot of current research in the area.

\section{Detailed Description of Score-of-Mixture Training and Distillation}
\label{sec:appendix_amortized_score_training}

\subsection{Amortized Denoiser}
\label{sec:appendix_amortized_denoiser}

Modern diffusion architectures such as the EDM architecture \cite{karras2022elucidating} are specially designed for denoising purposes (see Appendix \ref{sec:related_work}).  Hence, in practice we choose to train an \textit{amortized denoiser}, $\fv_\psi(\xv_t; \a, t) \approx \E_{\a p_t + (1 - \a)q_{\th, t}}[\xv | \xv_t]$,
upon which the amortized score can be recovered using Tweedie's formula \cite{Robbins1956AnEB},
\begin{equation*}
    \sv_\psi(\xv_t; \a, t) = -\frac{1}{\sigma_t^2}\xv_t + \frac{1}{\sigma_t^2}\fv_\psi(\xv_t; \a, t).
\end{equation*}
The mixture score matching loss in Eq.~\eqref{eq:mixture_score_matching_imp} can be expressed with this denoiser as
\begin{align*}
    \Lc^{\text{denoise}}_{\text{gen}}&(\psi; \a, t) 
\defeq \a\,\E_{p(\xv)q(\epsv)}[\|\fv_{\psi}(\xv; \a, t) - \xv\|^2] + (1 - \a)\,\E_{q_\th(\xv)q(\epsv)}[\|\fv_{\psi}(\xv; \a, t) - \xv\|^2].
\end{align*}

\subsection{Score-of-Mixture Training}
Here, we present a pseudocode for Score-of-Mixture Training (SMT).
See Algorithm~\ref{alg:smt}.
\begin{algorithm}[!hbt]
    \caption{Score-of-Mixture Training}
    \label{alg:smt}
    \begin{algorithmic}
        \State \textbf{Inputs:} Randomly initialized generator $\gv_\th$, amortized score model $\sv_\psi$, discriminator $\ell_\psi$, real dataset $\mathcal{D}$, score training sub-iterations $= 5$, learning rates $(\eta_\text{gen}, \eta_\text{score})$, GAN regularizer weights $(\text{score}=\mu, \text{gen}=\lambda)$
        
        \vspace{0.5em}
        \State \textbf{Pretraining:} Train $\gv_\th$ with DSM using $\mathcal{D}$
        \For{each pretraining iteration}
            \State Sample mini-batch $\xv \sim \mathcal{D}$ and add noise $\xv_t = \xv + \sigma_t \epsv, \epsv \sim \Nc(0, \mathbf{I})$
            \State Compute DSM loss $\Lc_\text{DSM}(\th)$ (see Sec.~\ref{sec:preliminaries})
            \State Update parameters: $\theta \gets \theta - \eta_\text{DSM} \nabla_\th \Lc_{\text{DSM}}(\th)$
        \EndFor
        
        \vspace{0.5em}
        \State \textbf{Training:} Alternating updates of $\gv_\th$ and $\sv_\psi$
        \For{each training iteration}
        
            \vspace{0.5em}
            \State \textbf{Generator Training:} Freeze $\sv_\psi$
            \State Sample mini-batch of fake samples $\xv^{\sf fake} = \gv_\th(\zv), \zv \sim \Nc(0, \mathbf{I})$
            \State Sample $t \sim p(t)$ and $\a$ as described in Sec.~\ref{sec:training_from_scratch_training}
            \State Compute weighted generator gradient $\mathbf{\gamma}^w_\psi(\th; \a, t)$ from Eq.~\eqref{eq:weighted gradient}
            \State Compute GAN regularizer loss $\Lc_\text{GAN}^{(\a, t)}$ from Eq.~\eqref{eq:non_sat_amortized_gan_loss}
            \State Update parameters:
            \[
            \theta \gets \theta - \eta_\text{gen} \E_{p(\a)p(t)}[w(\xv^{\sf fake}_t, \xv^{\sf fake}, \a, t)\mathbf{\gamma}^w_\psi(\th; \a, t)+ \lambda \nabla_\th\Lc_\text{GAN}^{(\a, t)}(\th)]
            \]
            
            \vspace{0.5em}
            \State \textbf{Amortized Score Training:} Freeze $\gv_\th$
            \For{each sub-iteration}
                \State Sample mini-batch of real samples $\xv^{\sf real} \sim \mathcal{D}$
                \State Sample $t \sim p(t)$ and $\a$
                \State Compute score matching loss $\Lc_{\text{score}}(\psi; \a, t)$ from Eq.~\eqref{eq:mixture_score_matching_imp}
                \State Compute non-saturated discriminator loss $\Lc_{\text{disc}}(\psi)$ (see Appendix~\ref{sec:appendix_on_gan_training})
                \State Update parameters:
                \[
                \psi \gets \psi - \eta_\text{score}\nabla_\psi (\E_{p(\a)p(t)}[\Lc_{\text{score}}(\psi; \a, t)] + \mu \Lc_{\text{disc}}(\psi))
                \]
            \EndFor
        \EndFor
        
        \vspace{0.5em}
        \State \textbf{Return:} Trained model parameters $\theta, \psi$
        
    \end{algorithmic}
\end{algorithm}

\newpage
\subsection{Score-of-Mixture Distillation}
We present an overview figure and pseudocode of Score-of-Mixture Distillation in Fig.~\ref{fig:smt_distillation} and Alg.~\ref{alg:smt distillation}.
We highlight the central differences in the distillation training from the training from scratch in  Alg.~\ref{alg:smt}.

\begin{itemize}
    \item The pretrained score model on the real data is available during distillation.  Thus, rather than defining an amortized score model that takes in a conditioning variable for $\a$, we show that we only need to learn a score model on the fake samples to make the objective simpler (see Proposition \ref{prop:alternate_parametrization}).  This is more effective as learning the score of multiple interpolated distributions is a generally more challenging task.
    \item We can initialize the fake score and generator with the weights from the pretrained model and don't need to run a separate pretraining phase as in training from scratch.
    \item Notice that in amortized score training in the distillation setting there is no explicit GAN loss to learn the discriminator in Algorithm \ref{alg:smt distillation}.  Learning the discriminator implicit to our proposed parametrization and once learned we can use this to minimize the skewed divergence via a non-saturating GAN regularizer for generator training.
\end{itemize}

\begin{figure}[!tbh]
    \centering
    \includegraphics[width=.75\linewidth]{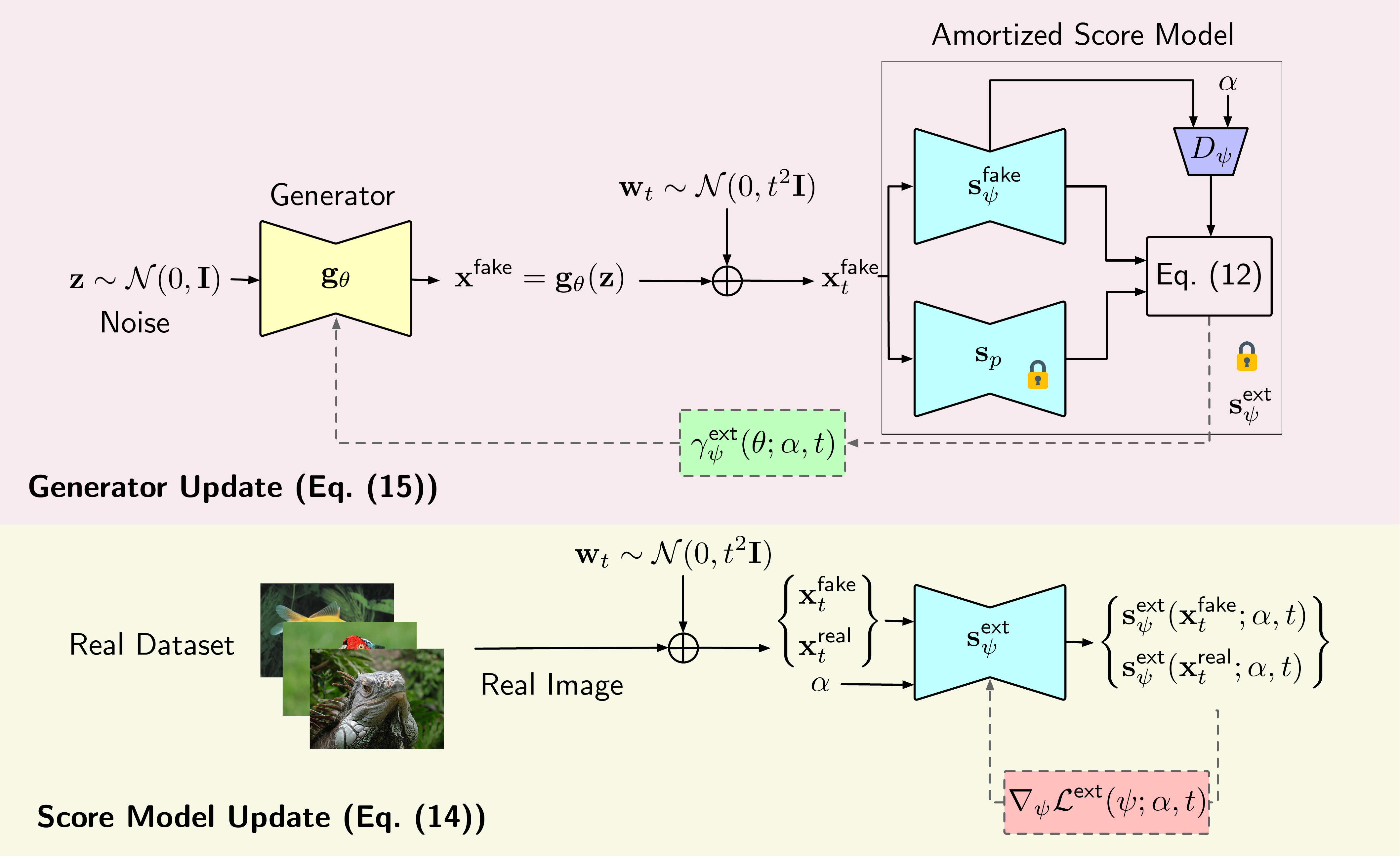}
    \caption{Overview of Score-of-Mixture Distillation. \textbf{Top:} To update the generator weights, the fake image is diffused at noise level $t$ and then used to compute the gradient of the $\a$-skew divergence with the explicitly parametrized amortized score model using Eq.~\eqref{eq:explicit grad}. \textbf{Bottom:} Amortized score model training involves computing the score of the mixture distribution on both fake and real samples diffused with noise level $t$ and then updating the weights using the gradient of Eq.~\eqref{eq:mixture_score_matching_imp_alternative}.}
    \label{fig:smt_distillation}
\end{figure}

\newpage
\begin{algorithm}[!hbt]
    \caption{Score-of-Mixture Distillation}
    \label{alg:smt distillation}
    \begin{algorithmic}
        \State \textbf{Inputs:} Randomly initialized generator $\gv_\th$, fake score model $\sv^{\sf fake}_\psi$, discriminator $\ell_\psi$, pretrained score model $\sv_p$, real dataset $\mathcal{D}$, score training sub-iterations $= 5$, learning rates $(\eta_\text{gen}, \eta_\text{score})$, GAN generator regularizer weight $\lambda$
        
        \vspace{0.5em}
        \State \textbf{Initialization} Initialize $\sv_\psi^{\sf fake}$ and $\gv_\th$ with weights from $\sv_p$
        
        \vspace{0.5em}
        \State \textbf{Training:} Alternating updates of $\gv_\th$ and $\sv_\psi$
        \For{each training iteration}
        
            \vspace{0.5em}
            \State \textbf{Generator Training:} Freeze $\sv^{\sf fake}_\psi$ and $\ell_\psi$
            \State Sample mini-batch of fake samples $\xv^{\sf fake} = \gv_\th(\zv), \zv \sim \Nc(0, \mathbf{I})$
            \State Sample $t \sim p(t)$ and $\a$ as described in Sec.~\ref{sec:training_from_scratch_training}
            \State Compute generator gradient $\mathbf{\gamma}^{\sf ext}_\psi(\th; \a, t)$ from Eq.~\eqref{eq:explicit grad}
            \State Compute GAN regularizer loss $\Lc_\text{GAN}^{(\a, t)}$ from Eq.~\eqref{eq:non_sat_amortized_gan_loss}
            \State Update parameters:
            \[
            \theta \gets \theta - \eta_\text{gen} \E_{p(\a)p(t)}[\mathbf{\gamma}^{\sf ext}_\psi(\th; \a, t)+ \lambda \nabla_\th\Lc_\text{GAN}^{(\a, t)}(\th)]
            \]
            
            \vspace{0.5em}
            \State \textbf{Amortized Score Training:} Freeze $\gv_\th$
            \For{each sub-iteration}
                \State Sample mini-batch of real samples $\xv^{\sf real} \sim \mathcal{D}$
                \State Sample $t \sim p(t)$ and $\a$
                \State Compute score matching loss using explicit parametrization $\Lc^{\sf ext}_{\text{score}}(\psi; \a, t)$ from Eq.~\eqref{eq:mixture_score_matching_imp_alternative}
                \State Update parameters:
                \[
                \psi \gets \psi - \eta_\text{score}\nabla_\psi \E_{p(\a)p(t)}[\Lc_{\text{score}}^{\sf ext}(\psi; \a, t)] 
                \]
            \EndFor
        \EndFor
        
        \vspace{0.5em}
        \State \textbf{Return:} Trained model parameters $\theta, \psi$
        
    \end{algorithmic}
\end{algorithm}

\section{On GAN Training}
\label{sec:appendix_on_gan_training}

The vanilla GAN is
\[
\min_\th \max_\psi
\{\E_{p(\xv)}[\log D_\psi(\xv)] + \E_{q_\th(\xv)}[\log (1-D_\psi(\xv))]\}.
\]
Breaking down, the discriminator training is 
\[
\min_\psi
-\E_{p(\xv)}[\log D_\psi(\xv)] - \E_{q_\th(\xv)}[\log (1-D_\psi(\xv))]\},
\]
and the generator training is
\[
\min_\th \E_{q_\th(\xv)}[\log (1-D_\psi(\xv))].
\]
Note that
\[
D_\psi(\xv) = \frac{r_\psi(\xv)}{1+r_\psi(\xv)}
= \frac{1}{1+r_\psi^{-1}(\xv)}.
\]
Further, the sigmoid logit $C_\psi(\xv)$, \ie $D_\psi(\xv)=\sigma(C_\psi(\xv))$, is $\log r_\psi(\xv)$.
Note that
the optimal discriminator for each $\th$ is $D^\star(\xv)=\frac{p(\xv)}{p(\xv)+q_\th(\xv)}$ or $r^\star(\xv)=\frac{p(\xv)}{q_\th(\xv)}$.

The non-saturating versions is
training generator based on
\[
\min_\th -\E_{q_\th(\xv)}[\log D_\psi(\xv))]
\approx \E_{q_\th(\xv)}\Bigl[\log\Bigl(\frac{q_\th(\xv)}{p(\xv)}+1\Bigr)\Bigr].
\]

The StyleGAN uses the non-saturating loss:
\begin{align*}
\min_\psi &~ \E_{p(\xv)}[\softplus(-\log r_\psi(\xv))] + \E_{q_\th(\xv)}[\softplus(\log r_\psi(\xv))]\},\\
\min_\th &~ \E_{q_\th(\xv)}[\softplus(-\log r_\psi(\xv))].
\end{align*}
Note that $\softplus(y)\defeq \log(1+e^y)$.
Hence, note that
\begin{align*}
\softplus(-\log r_\psi(\xv))
&= \log(1+r_\psi^{-1}(\xv))
= -\log D_\psi(\xv),\\
\softplus(\log r_\psi(\xv))
&= \log(1+r_\psi(\xv)) = -\log (1-D_\psi(\xv)).
\end{align*}

\subsection{On the Non-Saturating Generative Loss}
The original, saturating version of the generator objective is
\[
\min_\th \E_{q_\th(\xv)}[-\softplus(\log r_\psi(\xv))]
=\E_{q_\th(\xv)}[-\log(1+ r_\psi(\xv))],
\]
whose gradient is
\begin{align*}
\nabla_\th \E_{q(\zv)}[-\log (1 +r_\psi(\gv_\th(\zv)))]
&= \E_{q(\zv)}\Bigl[-\frac{r_\psi'(\gv_\th(\zv))}{1+r_\psi(\gv_\th(\zv))} \nabla_\th \gv_\th(\zv)\Bigr].
\end{align*}

Note that the \emph{plug-in} reverse KL-divergence loss is
\[
\min_\th \E_{q_\th(\xv)}[-\log r_\psi(\xv)].
\]
Compared to this, the non-saturating loss has the additional $\softplus(\cdot)$:
\[
\min_\th \E_{q_\th(\xv)}[\softplus(-\log r_\psi(\xv))]
=\E_{q_\th(\xv)}[\log(1+ r_\psi(\xv)^{-1})].
\]
This seems to help prevent vanishing gradients. 
Consider 
\[
\min_\th \E_{q_\th(\xv)}[\log (\tau +r_\psi(\xv)^{-1})]
=\E_{q(\zv)}[\log (\tau +r_\psi(\gv_\th(\zv))^{-1})],
\]
If $\tau=0$, it boils down to the plug-in reverse KL divergence, and $\tau=1$ recovers the non-saturating loss.
If we consider a gradient with respect to $\th$, we get
\begin{align*}
\nabla_\th \E_{q(\zv)}[\log (\tau +r_\psi(\gv_\th(\zv))^{-1})]
&= \E_{q(\zv)}\Bigl[-\frac{r_\psi'(\gv_\th(\zv))}{r_\psi(\gv_\th(\zv))(1+\tau r_\psi(\gv_\th(\zv))) } \nabla_\th \gv_\th(\zv)\Bigr]
\end{align*}
Here, recall that ${r_\psi(\xv)}\approx\frac{p(\xv)}{q_\th(\xv)}$ is supposed to be small for generated samples $\xv=\gv_\th(\zv)$.
Therefore, the plug-in loss with $\tau=0$ is inherently prone to vanishing gradient.

\subsection{GAN-Type Regularization with \texorpdfstring{$\a$}{alpha}-JSD}
\label{app:gan_type_reg}
To train a discriminator for our GAN-type regularization, we opt to use a modified GAN discriminator objective defined as
\begin{align*}
\min_\psi
-\a\E_{p(\xv)}[\log D_\psi(\xv;\a)] - (1 - \a)\E_{q_\th(\xv)}[\log (1-D_\psi(\xv;\a))].
\end{align*}
Similar to the vanilla GAN,
the optimal discriminator for each $\th$ and $\a$ in this case is
\[
D^\star_\psi(\xv;\a)=\frac{\a p(\xv)}{\a p(\xv)+(1-\a)q_\th(\xv)} \]
Then,the $\a$-JSD can be approximated as
\[
\sjsd^{(\a)}(q_\th,p) \approx
\frac{1}{1-\a}\E_{p(\xv)}\left[\log \frac{D_{\psi}(\xv;\a)}{\a}\right] + \frac{1}{\a}\E_{q_\th(\xv)}\left[\log\frac{1-D_\psi(\xv;\a)}{1-\a}\right].
\]
Hence, with this approximation, the generator update can be done via
\[
\min_\th \frac{1}{\a}\E_{q_\th(\xv)}\left[\log\frac{1-D_\psi(\xv;\a)}{1-\a}\right].
\]
In practice, we can use a weighted non-saturating version of the loss as well,
\begin{align*}
\min_\th -\E_{q_\th(\xv)}\left[\log\frac{D_\psi(\xv;\a)}{\a}\right] = \min_\th  \E_{q_\th(\xv)}\left[{\sf sp}\left(-\ell_\psi(\xv)-\log \frac{\a}{1 - \a}\right) \right] + \log \a,
\end{align*}
where $\ell_\psi(\xv) = \log \frac{p(\xv)}{q_\th(\xv)}$.

\subsection{On Discriminator Training with Mixture Score Matching Loss}
\label{app:lsgan}
In Sec.~\ref{sec:distillation}, we plugged in the explicit parameterization
\[
\sv_{\psi}^{\sf exp}(\xv; \a)
\defeq D_{\psi}(\xv; \a)\sv_p(\xv) 
+ (1-D_{\psi}(\xv; \a))\sv_{\psi}^{\sf fake}(\xv),
\]
to the mixture regression loss in Eq.~\eqref{eq:mixture_score_matching}, to train the fake score and the discriminator simultaneously.
If we consider an ideal scenario where we have the perfect score models for both $p$ and $q$, then all we need to train is the discriminator andthe mixture regression objective can be interpreted as a discriminator objective.
Here we reveal its connection to an instance of $f$-GAN discriminator objective.

Let $\sv_p(\xv)$ and $\sv_q(\xv)$ be the underlying score functions for $p$ and $q$, respectively.
Then, the explicit parameterization becomes
\[
\sv_{\psi}^{\sf exp}(\xv; \a)
= D_{\psi}(\xv; \a)\sv_p(\xv) 
+ (1-D_{\psi}(\xv; \a))\sv_q(\xv),
\]
and the mixture regression objective becomes only a function of the discriminator, \ie 
\begin{align*}
\Lc(\psi; \a) 
&= \a\,\E_{p(\xv)}[\|\sv_{\psi}^{\sf exp}(\xv; \a) - \sv_p(\xv)\|^2] 
+ (1 - \a)\,\E_{q_\th(\xv)}[\|\sv_{\psi}^{\sf exp}(\xv; \a) - \sv_{q}(\xv)\|^2]\\
&= \a\,\E_{p(\xv)}[(1-D_\psi(\xv;\a))^2\|\sv_p(\xv)-\sv_q(\xv)\|^2] 
+ (1 - \a)\,\E_{q(\xv)}[D_\psi(\xv;\a)^2\|\sv_p(\xv)-\sv_q(\xv)\|^2]\\
&= \int \Bigl\{
\a p(\xv) (1-D_\psi(\xv;\a))^2
+ (1-\a) q(\xv) D_\psi(\xv;\a)^2\Bigr\}
\|\sv_p(\xv)-\sv_q(\xv)\|^2 \diff\xv.
\end{align*}
Here, we note that the term $\|\sv_p(\xv) - \sv_q(\xv)\|^2$ is common in both expectation, and can be safely dropped to train the discriminator, which leads to a simplified objective
\begin{align*}
\Lc'(\psi; \a) 
&= \a\,\E_{p(\xv)}[(1-D_\psi(\xv;\a))^2] 
+ (1 - \a)\,\E_{q(\xv)}[D_\psi(\xv;\a)^2]\\
&= \int \Bigl\{
\a p(\xv) (1-D_\psi(\xv;\a))^2
+ (1-\a) q(\xv) D_\psi(\xv;\a)^2\Bigr\} \diff\xv.
\end{align*}
We note that this is equivalent to the discriminator objective induced by the following $f$-divergence
\begin{align*}
D_{f_\a} (p~\|~q)
&\defeq 
1 - \int \frac{p(\xv)q(\xv)}{\a p(\xv) + (1-\a) q(\xv)}\diff \xv
\defeq D_{\text{\sf $\a$-LC}}(p ~\|~ q),
\end{align*}
where $f_\a(r)\defeq \frac{(1-\a)(1-r)}{\a r + (1-\a)}$ is a convex function over $[0,\infty)$ for $\a\in(0,1)$.
For $\a=\half$, this divergence becomes symmetric in $p$ and $q$ and is known as the Le Cam distance~\citep[p.~47]{LeCam2012} in the literature~\citep{Polyanskiy--Wu2019}.
We thus call the general divergence for $\a\in(0,1)$ the $\a$-Le Cam distance.
In the GAN literature, this is known as the LSGAN objective~\citep{Mao--Li--Xie--Lau--Wang--Paul2017}.

As we revealed, our discriminator training in distillation can also be done separately using the $\alpha$-Le Cam-distance-based objective.
However, we conjecture that our score-regression-based end-to-end objective may have benefit, as our primary goal of discriminator training is to use it in the generator update in the form of an approximate score of mixture.
We leave the further exploration of such alternative methods as a future work.

\section{More on Experiments and Additional Results}
\label{sec:appendix_experiments_and_results}

We present some additional experiments and results in this section.  We first provide a more detailed training configuration for our experiments in Sec.~\ref{sec:experiments} and then evaluate our proposed method on a synthetic swiss-roll dataset in Appendix \ref{sec:toy_swiss_roll}.  
Finally, we present some samples generated from Score-of-Mixture Training and Score-of-Mixture Distillation in Figs.~\ref{fig:cifar10 scratch}-\ref{fig:imagenet distillation}.

\subsection{Training Configuration}
\label{sec:training configuration}

We summarize the detailed training configuration in Table~\ref{tab:hyperparameters}.

\begin{table}[!htb]
    \setlength{\tabcolsep}{15pt}
    \caption{Hyperparameters used for training one-step generators with Score-of-Mixture Training and Distillation.}\label{tab:hyperparameters}
    \vspace{.5em}
    \centering
    \small
    \begin{tabular}{l|cc|cc}
        \toprule
        Hyperparameter & \multicolumn{2}{c|}{CIFAR-10} & \multicolumn{2}{c}{ImageNet $64\times 64$} \\
        & Scratch & Distillation & Scratch & Distillation \\
        \midrule
        Generator learning rate & 1e-4 & 5e-5 & 5e-6 & 2e-6 \\
        Score learning rate & 5e-4 & 5e-5 & 5e-5 & 2e-6 \\
        Score learning rate decay & cosine & None & cosine & None\\
        Batch size & 280 & 280 & 280 & 280 \\
        Diffusion pretraining steps & 15k & N/A & 40k & N/A\\  
        Training iterations & 150k & 150k & 200k & 200k \\
        Score dropout probability & 0.13 & 0.00 & 0.00 & 0.00 \\
        Number of GPUs & 2 $\times$ A100 & 4$\times$ A100 & 7$\times$ A100 & 7$\times$ A100 \\
        \bottomrule
    \end{tabular}
\end{table}

\subsection{Toy Swiss Roll}
\label{sec:toy_swiss_roll}

We tested our proposed framework and ablated various design choices on a synthetic swiss roll dataset.  
We followed the dataset setup by \citet{che2020your}. 
We trained models with SMT and SMD and compared this against an amortized version of reverse KL minimization with DMD weighting ($\a \in \{0, 1\}$) similar to the ablations in Sec. \ref{sec:ablation_studies}. Additionally we compared against non-score-based baselines including the vanilla GAN and Diffusion-GAN~\citep{Wang--Zheng--He--Chen--Zhou2023}.

\begin{figure}[!ht]
    \centering
    \includegraphics[width=0.9\linewidth]{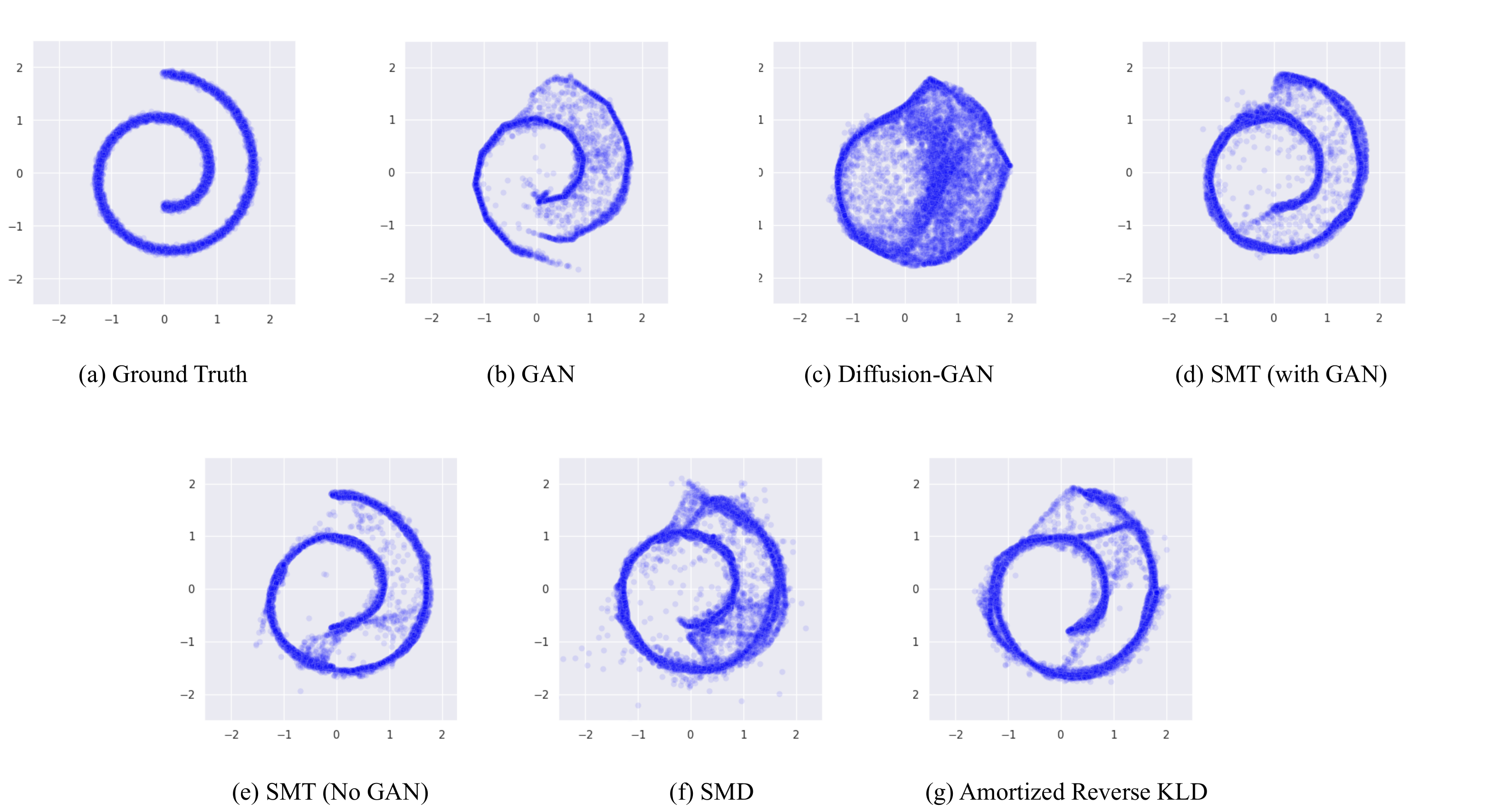}
    \caption{Samples produced by generators trained using different methods. All figures are created using 10,000 samples from the respective generator.}
    \label{fig:swiss roll}
\end{figure}

Across all experiments, we use the same generator architecture --- a two-layer MLP with a hidden dimension of 128 and leaky ReLU nonlinearity. 
We train all models for 200k steps on a single NVIDIA 3090 GPU with a batch size of 256. All score-based methods leverage a learning rate of 1e-5 for the generator and 1e-4 for the amortized score (and discriminator when applicable) whereas the GAN-based methods use a learning rate of 1e-4 for both generator and discriminator. We use the AdamW optimizer without any learning rate schedulers.

The samples produced are shown in Fig.~\ref{fig:swiss roll}.  Notice how the GAN is unable to perfectly cover the entire continuous mode of the swiss roll. The Diffusion-GAN, a multi-noise level extension of the GAN, covers the mode but also samples from areas of low density.  We found the latter to be sensitive to the chosen noise levels in comparison to the methods based on updating the generator using the score.

Our results for training from scratch and distillation are presented in Fig.~\ref{fig:swiss roll}d-f. All three methods successfully capture the modes of the underlying distribution. 
While the impact of the GAN regularizer is less pronounced than in our high-dimensional experiments, we observe that enabling it (as in Fig.~\ref{fig:swiss roll}d) reduces the number of samples in low-density regions compared to Fig.~\ref{fig:swiss roll}e.
The distillation results in Fig.~\ref{fig:swiss roll}f appear slightly noisy, likely due to the quality of the pre-trained score model. 
This highlights the advantage of training from scratch, as it avoids amplifying existing estimation errors in the pre-trained model.

In Fig.~\ref{fig:swiss roll}g, we present an ablation result of the $\alpha$-sampler, by using only $\a=1$. 
This corresponds to minimizing reverse KLD as in DMD and DMD2.
Unlike in the high-dimensional setting presented in Fig.~\ref{fig:global}b, we observe that using the single $\a=1$ produces visually plausible samples in this low-dimensional synthetic example. 
However, our method in Fig.~\ref{fig:swiss roll}d (i.e., SMT with GAN regularizer) produces fewer spurious samples compared to Fig.~\ref{fig:swiss roll}g, suggesting the benefit of multiple $\alpha$ values.

\subsection{Image Interpolation}
\label{sec:image interpolation}

The one-step generator is a mapping between the representation space, which in this case is the space of standard multivariate Gaussian variables, and the space of images.  Thus, to study the representation space we followed the approached in \citep{song2023consistency} and spherically interpolated between two randomly chosen noise instances $\zv_0$ and $\zv_1$,
\begin{equation*}    
\zv_\beta = \frac{\sin((1-\beta)\psi)}{\sin(\psi)}\zv_0 + \frac{\sin(\beta\psi)}{\sin(\psi)}\zv_1,
\end{equation*}
where $\beta \in [0, 1]$ and $\psi = \arccos\left(\frac{\zv_0^\intercal \zv_1}{\|\zv_0\|_2\|\zv_1\|_2}\right)$.  After interpolating between these two points, the interpolated image can be obtained as $\xv_\beta = \gv_\th(\zv_\beta)$ as shown in Figures \ref{fig:imagenet interpolation} and \ref{fig:cifar10 interpolation}.

\begin{figure}
    \centering
    \includegraphics[width=0.5\linewidth]{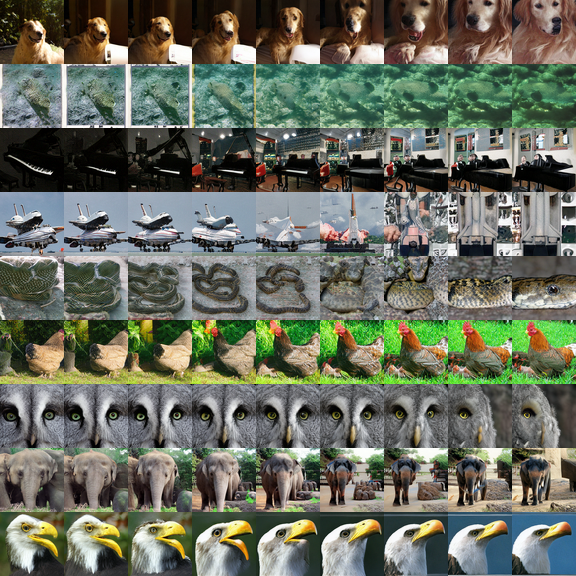}
    \caption{Visualizing the latent space of the one-step generator trained on ImageNet $64 \times 64$ by interpolating between two noise inputs.  The leftmost and rightmost image in each row correspond to synthesized images with the same class and different noises $\zv_0$ and $\zv_1$. All intermediate images are obtained by applying the generator on a spherical interpolation between these noise instances, demonstrating the interpretable learned latent space of the generator.
}
    \label{fig:imagenet interpolation}
\end{figure}

\begin{figure}
    \centering
    \includegraphics[width=0.5\linewidth]{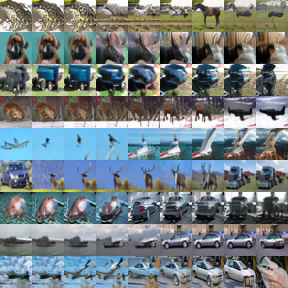}
    \caption{Visualizing the latent space of the one-step generator trained on the CIFAR-10 dataset by interpolating between two noise inputs.  The leftmost and rightmost image in each row correspond to different noises $\zv_0$ and $\zv_1$. All intermediate images are obtained by applying the generator on a spherical interpolation between these noise instances, demonstrating the interpretable learned latent space of the generator.
}
    \label{fig:cifar10 interpolation}
\end{figure}

\subsection{Additional Training Curves}
\label{sec:additional training curves}

\begin{figure*}[t!]
    \centering
    \begin{subfigure}[t]{0.48\textwidth}
        \centering
        \includegraphics[scale=0.55]{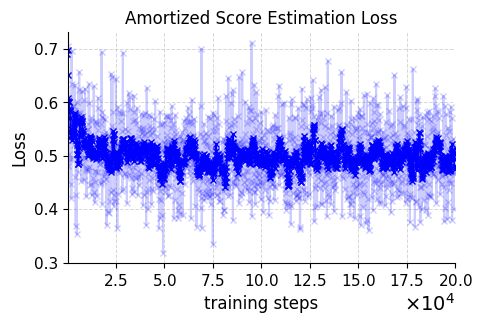}
        \caption{}
    \end{subfigure}%
    ~ 
    \begin{subfigure}[t]{0.48\textwidth}
        \centering
        \includegraphics[scale=0.55]{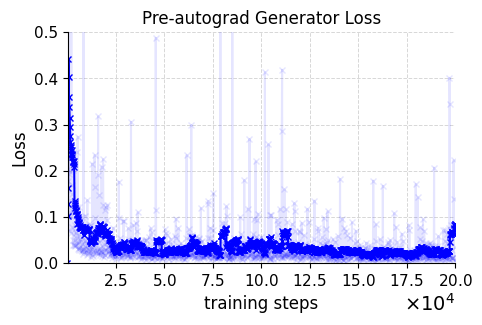}
        \caption{}
    \end{subfigure}
    \begin{subfigure}[t]{0.48\textwidth}
        \centering
        \includegraphics[scale=0.55]{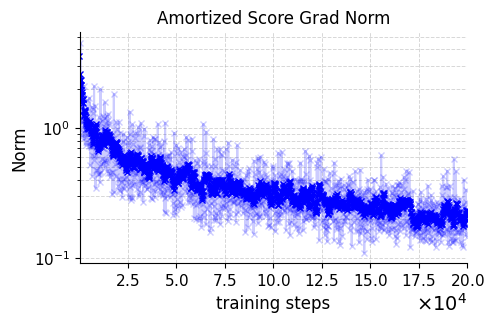}
        \caption{}
    \end{subfigure}%
    ~ 
    \begin{subfigure}[t]{0.48\textwidth}
        \centering
        \includegraphics[scale=0.55]{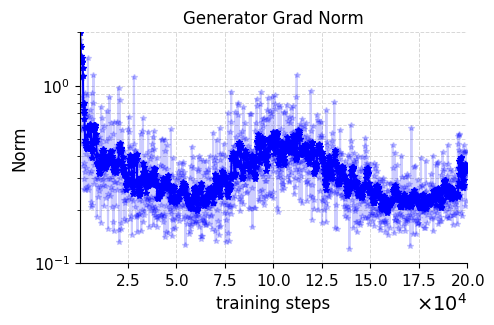}
        \caption{}
    \end{subfigure}
    \caption{SMT training curves on ImageNet 64x64.  Plotted in dark blue is the running average trajectory of the different metrics.  Both the loss curves in (a) and (b) are smooth and do not explode.  This is further supported by the curves of the respective gradient norms in (c) and (d), where it is clear the underlying optimization is stable and no gradient explosion occurs.  The actual generator loss involved the gradient of the generator multiplied by the difference of scores.  During implementation we let autograd take care of this gradient and the loss that is calculated pre-autograd is shown in (b).
}
\label{fig:training curves}
\end{figure*}
To further demonstrate the stable training dynamics of SMT, we provide additional training curves in Figure \ref{fig:training curves} showing a consistent decrease in both loss and gradient norm. Notably, at no point do we observe any spikes or instability in either metric. Since we compute the generator's gradient directly, the plotted loss in Figure \ref{fig:training curves}b corresponds to the proxy objective prior to applying automatic differentiation.

\subsection{Samples}
\label{sec:samples}

We present some image samples generated by SMT and SMD in Figs.~\ref{fig:cifar10 scratch}-\ref{fig:imagenet distillation}.

\vspace{4em}
\begin{figure}[!h]
    \centering
    \includegraphics[width=\textwidth]{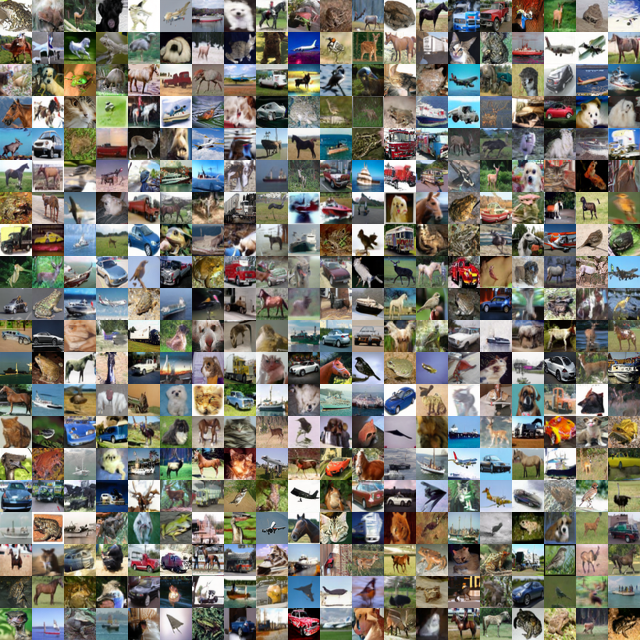}
    \caption{{One-step generated samples from SMT on CIFAR-10 (unconditional).}}
    \label{fig:cifar10 scratch}
\end{figure}

\begin{figure}
    \centering
    \includegraphics[width=\textwidth]{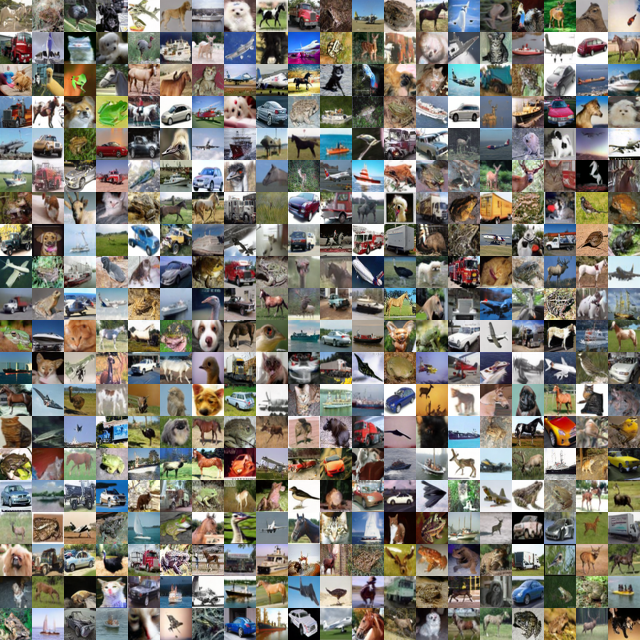}
    \caption{{One-step generated samples from SMD on CIFAR-10 (unconditional).}}
    \label{fig:cifar10 distillation}
\end{figure}

\begin{figure}
    \centering
    \includegraphics[width=\textwidth]{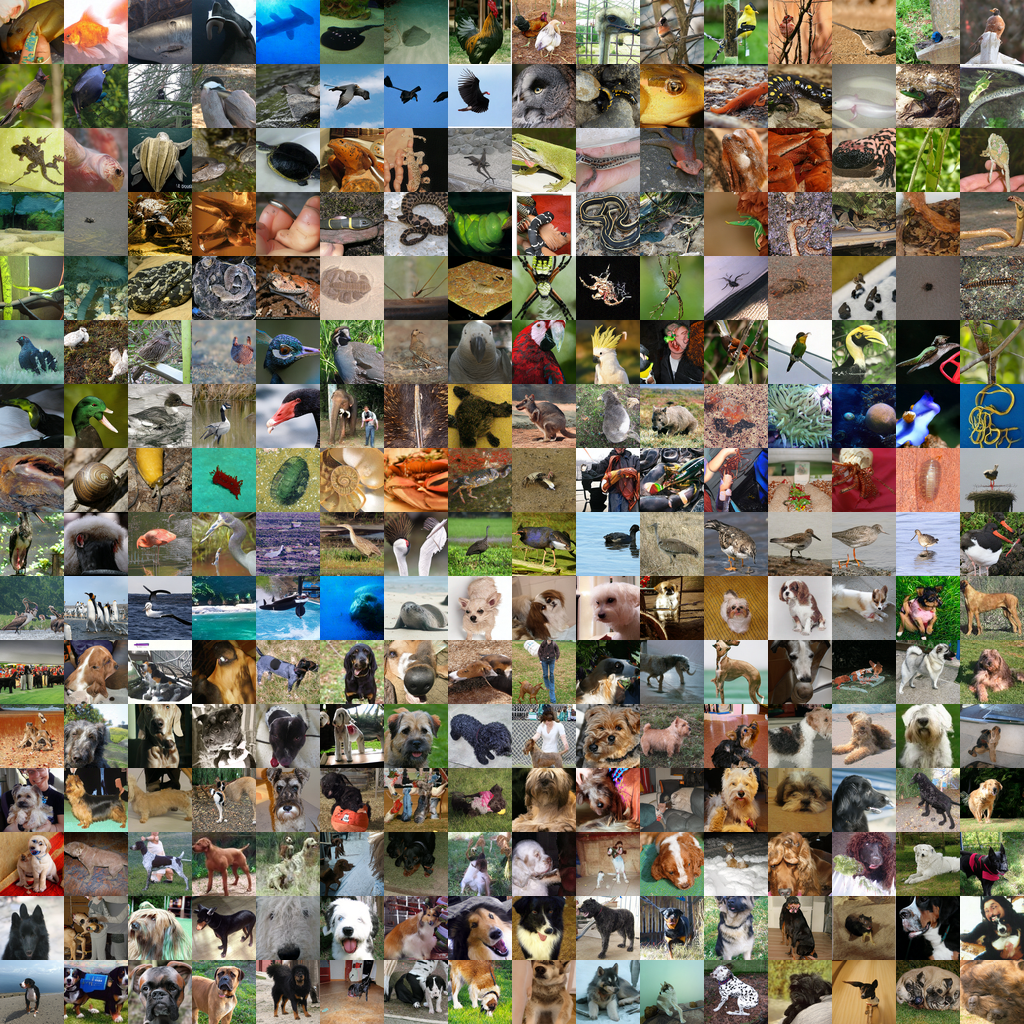}
    \caption{{One-step generated samples from SMT on ImageNet 64$\times$64  (conditional).}}
    \label{fig:imagenet scratch}
\end{figure}

\begin{figure}
    \centering
    \includegraphics[width=\textwidth]{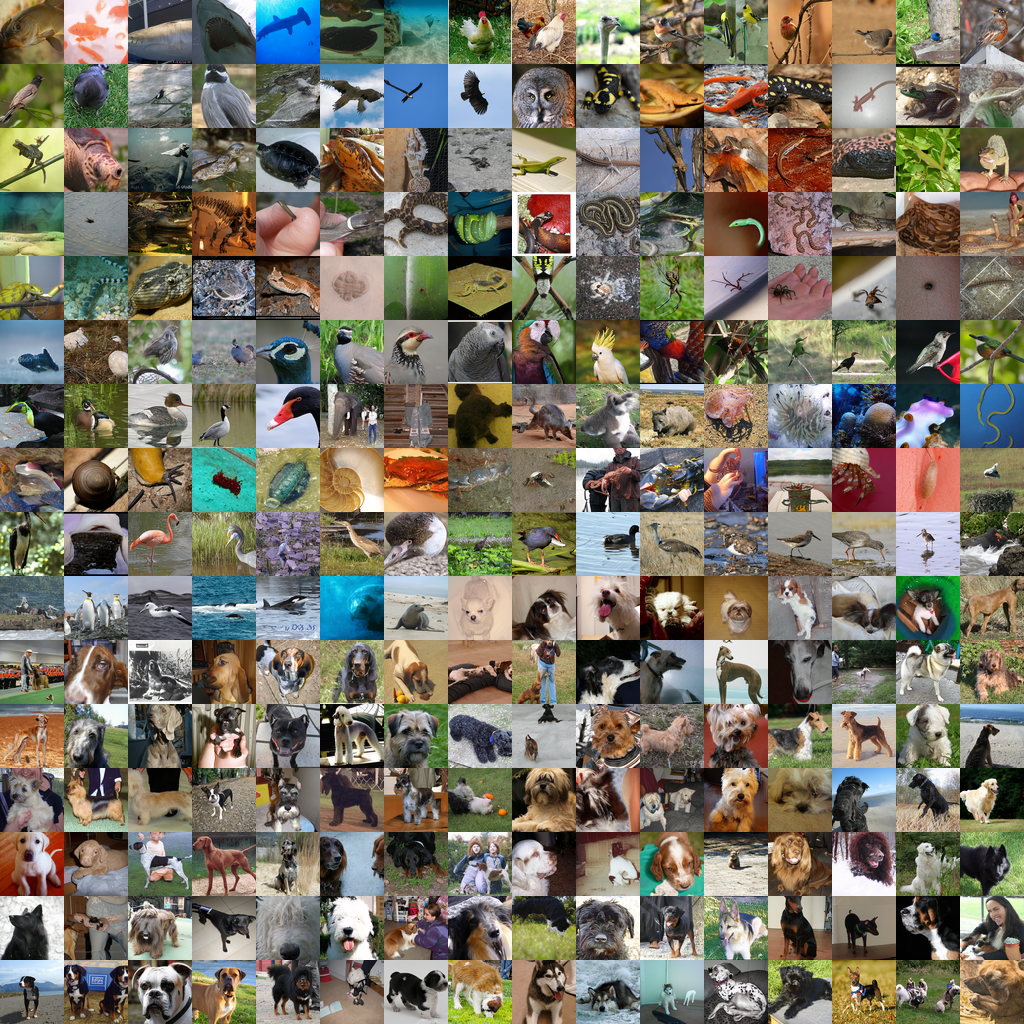}
    \caption{{One-step generated samples from SMD on ImageNet 64$\times$64  (conditional).}}
    \label{fig:imagenet distillation}
\end{figure}

\end{document}